\newtheorem{thm}{Theorem}
\newtheorem{lem}{Lemma}
\newtheorem{defn}{Definition}
\newcommand{\mbf}[1]{\boldsymbol{#1}}
\newcommand{\Tr}[1]{\mathrm{\,Tr\,}#1}
\begin{document}

\title{Probable convexity \\and its application to Correlated Topic Models\thanks{This work was partialy done when K. Than was at JAIST.}}

\author{\name Khoat Than \email khoattq@soict.hust.edu.vn \\
       \addr Hanoi University of Science and Technology, 1 Dai Co Viet road, Hanoi, Vietnam
       \AND
       \name Tu Bao Ho \email bao@jaist.ac.jp \\
       \addr Japan Advanced Institute of Science and Technology, 1-1 Asahidai, Nomi, Ishikawa 923-1292, Japan
       }

\editor{}

\maketitle

\begin{abstract}
Non-convex optimization problems often arise from probabilistic modeling, such as estimation of posterior distributions.  Non-convexity makes the problems intractable, and poses various obstacles for us to design efficient algorithms. In this work, we attack non-convexity by first introducing the concept of \emph{probable convexity} for  analyzing convexity of real functions in practice. We then use the new concept to analyze an inference problem in the \emph{Correlated Topic Model} (CTM) and related nonconjugate models. Contrary to the existing belief of intractability, we show that this inference problem is concave under certain conditions. One consequence of our analyses is a novel algorithm for learning CTM which is significantly more scalable and qualitative than existing methods. Finally, we highlight that stochastic gradient algorithms might be a practical choice to resolve efficiently non-convex problems. This finding might find beneficial in many contexts which are beyond probabilistic modeling.
\end{abstract}

\begin{keywords}
  Non-convex optimization, Posterior estimation, Posterior inference, Non-conjugate models, CTM, Stochastic gradient decent.
\end{keywords}

\section{Introduction} \label{sec:Introduction}

Estimation of posterior distributions plays a central role when developing probabilistic graphical models. With conjugate priors, we are likely able to derive efficient sampling algorithms for estimation \citep{GriffithsS2004,PritchardSD2000population}. When nonconjugate priors are used, the estimation problem is much more difficult, as observed in the topic modeling literature by \citet{BleiL07,SalomatinYL09,PutthividhyaAN10,PutthividhyaAN09,AhmedX2007,{BleiL2006DTM}}. A popular approach is to cast estimation as an optimization problem. Nonetheless, the resulting problems are often non-convex. Non-convexity poses various obstacles for designing efficient algorithms, and  does not allow us to directly exploit the nice theory of convex optimization.

In this work, we introduce the concept of \emph{probable convexity} that aims at two targets: (1) to reveal how hard an optimization problem in practice is; (2) to support us smoothly employ efficient methods of convex optimization to deal with non-convex problems. In a perspective,  probable convexity of a  family  $\mathfrak{F}$ of real functions essentially says that most members of $\mathfrak{F}$ are convex.  With such families, in practice we probably rarely meet non-convex functions from $\mathfrak{F}$. We remark that in many situations of data analytics (e.g., posterior estimation in graphical models) we often have to deal with not only one but many members of a family at once. Hence some appearances of non-convex members may not affect significantly the overall result. Hence a direct employment of convex optimization is possible and  beneficial. In other words, we could do minimization efficiently for functions of $\mathfrak{F}$ in practice.

We next use the concept to investigate estimation of posterior distributions in the \textit{Correlated Topic Model} (CTM) \citep{BleiL07} and related nonconjugate models. In particular, we study the problem of a posteriori estimating theta (topic mixture) for a given document: $\mbf{\theta}^*= \arg \max_{\mbf{\theta}} \Pr(\mbf{\theta}| \mbf{d})$. This is an MAP problem and is intractable for many models in the worst case \citep{SontagR11}. We show that under certain conditions, the objective function of this MAP problem is in fact \emph{probably concave}, i.e., concave with high probability. This suggests that posterior estimation of theta may be tractable in practice. Similar results are obtained for  related nonconjugate topic models.

The cornerstone of our analyses of nonconjugate models is the logistic-normal function which originates from the logistic-normal distribution \citep{Aitchison1980logistic}. We show in this work that the logistic-normal function is probably concave under certain conditions. This result may be of interest elsewhere and beneficial in practical applications, because the logistic-normal distribution is used as an effective prior in many contexts including topic modeling \citep{BleiL07,SalomatinYL09,PutthividhyaAN10,PutthividhyaAN09,BleiL2006DTM,Miao+2012LAA} and grammar induction \citep{Cohen2009shared,Cohen2010grammars}.

As a consequence of our analysis, a novel algorithm for learning CTM is proposed. This algorithm is surprisingly simple in which posterior estimation of theta is done by the Online Frank-Wolfe (OFW) algorithm  \citep{Hazan2012OFW}. From empirical experiments we find that the new algorithm is significantly faster than existing ones, while maintaining or making better the quality of the learned models. This further suggests that even though  MAP inference for CTM is intractable in the worst case, most instances in practice may be  resolved efficiently. 

Finally, we find that stochastic gradient decent (SGD) might be a practical choice to resolve efficiently non-convex problems. SGDs such as OFW \citep{Hazan2012OFW} are originally introduced in the convex optimization literature. They are often very efficient and have many advantages over deterministic algorithms, especially in large-scale settings. However, to our best knowledge, no prior study has been made to investigate the role of SGDs for resolving non-convex problems. We argue that due to their stochastic nature, SGD algorithms might be able to jump out of local optima to reach closer to global ones. Hence SGDs seem to be more advantageous than traditional (deterministic) methods for non-convex problems. We complement this observation by the successful use of OFW to solve  posterior estimation of theta in CTM.

\textsc{Organization:}
We present the concept of probable convexity in Section~\ref{ch5-sec:concepts}.  Section~\ref{LN-concavity} presents our analysis of the logistic-normal function. The study of CTM and related nonconjugate models is presented in Section~\ref{ch5-sec:MAP-in-CTM}. The new algorithm for learning CTM and experimental results are discussed in Section~\ref{ch5-sec:learning-CTM}. We also investigate in this section how well SGDs could resolve non-convex problems by analyzing OFW. The final section is for further discussion and conclusion.

\textsc{Notation:}
Throughout the paper, we use the following conventions and notations. Bold faces denote vectors or matrices. $x_i$ denotes the $i^{th}$ element of vector $\mbf{x}$, and $A_{ij}$ denotes the element at row $i$ and column $j$ of matrix $\mbf{A}$. Notation $\mbf{A} \le 0$ means that matrix $\mbf{A}$ is \textit{negative semidefinite}.  For a given vector $\mbf{x} = (x_1, ..., x_V)^t$, we denote $\frac{1}{\mbf{x}} = (\frac{1}{x_1}, ..., \frac{1}{x_V})^t$ and $\log \tilde{\mbf{x}} = (\log \frac{x_1}{x_V}, ..., \log \frac{x_{V-1}}{x_V})^t$. $diag(\mbf{x})$ denotes the diagonal matrix whose diagonal entries are $x_1, ..., x_V$, respectively. More notations are: 

\noindent
\begin{tabular}{rl}
   $\mathcal{V}$: & vocabulary of $V$ terms, often written as $\{1, 2,...,V\}$. \\
   $\boldsymbol{d}$: & a document represented as a count vector of $V$ dimensions, \\
   & $\boldsymbol{d}= (d_1, d_2, ..., d_V)$ where $d_j$ is the frequency of term $j$. \\
  $\mathcal{C}$: & a corpus consisting of $M$ documents, $ \{\boldsymbol{d}_1, ..., \boldsymbol{d}_M\}$. \\
  $K$: & number of topics. \\
  $\boldsymbol{\beta}_k$: & a topic which is a distribution over the vocabulary $\mathcal{V}$. It is written as \\
   & $\boldsymbol{\beta}_k = (\beta_{k1},...,\beta_{kV})^t$, where $\beta_{kj} \ge 0, \sum_{j=1}^{V} \beta_{kj} =1$. \\
   $\mathbb{E}$: & the expectation of a random variable. \\
  $\Delta_K$: & the unit simplex  in the $K$-dimensional space, \\
  & $ \Delta_K=  \{\mbf{x} \in \mathbb{R}^K: \sum_{k=1}^K x_k = 1, x_j \ge 0, \forall j \}$. \\
  $\overline{\Delta}_K$: & the interior of $\Delta_K$, that is  $\overline{\Delta}_K =  \{\mbf{x} \in \mathbb{R}^K: \sum_{k=1}^K x_k = 1, x_j > 0, \forall j \}$. \\
  $\mbf{e}_i$: & the $i^{th}$ unit vector in the Euclidean space, i.e, $e_{ii} =1$ and $e_{ij} =0, \forall j \ne i$. \\
    $\exp x$: & denotes $e^x$. \\
  $\mathcal{N(\mbf{\mu}, \mbf{\Sigma})}$: & the multivariate Gaussian distribution with mean $\mbf{\mu}$ and covariance  $\mbf{\Sigma}$. \\
  $x \sim \mathcal{A}(\cdot)$: &  the random variable $x$ follows the distribution $\mathcal{A}(\cdot)$. \\
  $\Tr \mbf{A}$: &  the trace of matrix $\mbf{A}$. \\
  $\lambda_i(\mbf{A})$ & the $i^{th}$ largest eigenvalue of  matrix $\mbf{A}$. \\
  $\mathbb{S}^K$: & the set of all symmetric matrix of size $K \times K$. \\
  $\mathbb{S}^K_+$: &  the set of all positive definite matrices of $\mathbb{S}^K$. \\
  $\nabla f$ or $f'$: & the gradient (first-order derivative) of the given function $f$.  \\
  $f''$: & the Hessian matrix (second-order derivative) of the given function $f$.  \\
  $\det \mbf{A}$: & the determinant of the square matrix $\mbf{A}$.
\end{tabular}

\section{Probable convexity} \label{ch5-sec:concepts}

Let $\mathfrak{F}(x; a)$ be a family of real functions defined on a set $X \subset \mathbb{R}^K$, parameterized by $a$. Each value of $a$ determines a function $f(x;a)$ of $\mathfrak{F}(x; a)$. 

\begin{defn}[probable convexity]
Let $\mathfrak{F}(x; a)$ be a family of functions defined on a set $X \subset \mathbb{R}^K$, parameterized by $a$. Family $\mathfrak{F}(x; a)$ is said to be \emph{probably convex} if there exists a positive constant $p$ such that any element of $\mathfrak{F}(x; a)$ is convex on $X$ with probability at least $p$. Equivalently, $\mathfrak{F}(x; a)$ is said to be \emph{p-convex} if any element of $\mathfrak{F}(x; a)$ is convex on $X$ with probability at least $p$.
\end{defn}

By definition, a family of convex functions is probably convex with probability 1. The family $\mathfrak{F}(x; a,b,c)= \{a x^2 + bx+c: a, b, c \in \mathbb{R}\}$ is probably convex with probability 1/2, since convexity of this family is decided by the sign of $a$.

In a perspective, probable convexity of a family may refer to the proportion of convex members in that family. High $p$ implies that most members are convex on $X$. Family $\mathfrak{F}(x; a,b,c)$ reflects well this perspective.

In another perspective, $p$-convexity of $\mathfrak{F}$ may refer to the case that every member of $\mathfrak{F}$ is convex over a part of  $X$. High $p$ implies that the members of $\mathfrak{F}$ is convex over most of $X$. As an example, family $\mathfrak{F}(x; a) = \{x^4 - 6x^2 +ax: x \in [-10, 10], a \in \mathbb{R}\}$ is 0.9-convex, because each member is convex over $90\%$ of $[-10, 10]$.

\begin{defn}[almost sure convexity]
Let $\mathfrak{F}(x; a)$ be a family of functions defined on a set $X \subset \mathbb{R}^K$, parameterized by $a$. Family $\mathfrak{F}(x; a)$ is said to be \emph{almost surely convex} if any element of $\mathfrak{F}(x; a)$ is convex on $X$ with probability $1$.
\end{defn}

It is easy to see that a family of  convex functions is almost surely convex. By definition, the family $\mathfrak{F}(x; a,b,c)$ is not almost surely convex. If a family is almost surely convex, almost all of its members are convex.

A family $\mathfrak{F}(x; a)$ is said to be \emph{p-concave} if the family $-\mathfrak{F}(x; a) = \{-f(x;a): f(x;a) \in \mathfrak{F}(x; a)\}$ is $p$-convex. One can easily realize that if $\mathfrak{F}(x; a)$ is  \emph{p-concave}, then $-\mathfrak{F}(x; a)$ is  \emph{p-convex} and vice versa.

The concept of probable convexity applies equally to the cases of only one function. A function $f(x)$ is said to be $p$-convex in $X$ if it is convex in $X$ with probability at least $p$. Similarly, function $f(x)$ is said to be $p$-concave in $X$ if it is concave in $X$ with probability at least $p$.

Convex optimization refers to minimizing a convex function over a convex domain. It is also refers to maximizing a concave function over a convex domain. It has a long history and has a rich foundation. Convex problems are often considered as being easy since there exist various fast algorithms. The book by \citet{Boyd2004convex} provides an excellent introduction to the field. 

\section{Concavity of the logistic-normal function} \label{LN-concavity}

We first consider probable convexity of the following function which is called \emph{logistic-normal}:
\begin{eqnarray}
\label{ch5-eq:LN}
LN(\mbf{x}; \mbf{\mu}, \mbf{\Sigma}) =  -\frac{1}{2} (\log \tilde{\mbf{x}} - \mbf{\mu})^t \mbf{\Sigma}^{-1} (\log \tilde{\mbf{x}}  - \mbf{\mu}) - \sum_{k=1}^K \log x_k,
\end{eqnarray}
where $\mbf{\mu} \in \mathbb{R}^{K-1}, \mbf{\Sigma} \in \mathbb{S}^{K-1}_+$; $\mbf{x} \in \overline{\Delta}_K$ such that  $\log \tilde{\mbf{x}} \sim \mathcal{N}(\mbf{\mu}, \mbf{\Sigma})$. This function naturally originates from the logistic-normal distribution \citep{Aitchison1980logistic}, whose density is $p(\mbf{x}; \mbf{\mu}, \mbf{\Sigma}) \propto \exp(LN(\mbf{x}; \mbf{\mu}, \mbf{\Sigma}))$. Due to the broad use of this distribution in probabilistic modeling, the logistic-normal function plays an important role in many contexts. Nonetheless, the function itself is neither convex nor concave in $\overline{\Delta}_K$. This is one of the main reasons for why posterior estimation in nonconjugate models is often intractable.

By a thorough analysis of this function, we found the following property.

\begin{thm} \label{LN-thm:3.1}
Denote $p = 1- e^{2\log(K-1) - 0.5{(\lambda -1)^2}/{\sigma} }$ for $\lambda = \lambda_{K-1}(\mbf{\Sigma}^{-1})$ and $ \sigma = \max_i \Sigma^{-1}_{ii}$. 
Function $LN(\mbf{x};  \mbf{\mu}, \mbf{\Sigma})$ is $p$-concave over $\overline{\Delta}_K$ if $\lambda \ge 1$.
\end{thm}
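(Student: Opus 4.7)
The plan is to compute the Hessian of $LN$ at $\mbf{x}$, isolate the negative-semidefinite contribution from $-\mbf{\Sigma}^{-1}$, and absorb the remainder using Gaussian tail bounds on the random vector $\mbf{\eta}:=-\mbf{\Sigma}^{-1}(\mbf{y}-\mbf{\mu})$, where $\mbf{y}=\log\tilde{\mbf{x}}$. Under the logistic-normal assumption $\mbf{y}\sim\mathcal{N}(\mbf{\mu},\mbf{\Sigma})$, so $\mbf{\eta}\sim\mathcal{N}(\mbf{0},\mbf{\Sigma}^{-1})$; in particular each $\eta_i$ is a mean-zero Gaussian with $\mathrm{Var}(\eta_i)=\Sigma^{-1}_{ii}\le\sigma$.

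\emph{Step 1 (compute the Hessian).} I would parametrize $\overline{\Delta}_K$ by $(x_1,\ldots,x_{K-1})$ with $x_K=1-\sum_{i<K}x_i$, set $D:=\mathrm{diag}(x_1,\ldots,x_{K-1})$, and note that $J:=\partial\mbf{y}/\partial\mbf{x}'=D^{-1}+(1/x_K)\mathbf{1}\mathbf{1}^t$. A chain-rule computation through $\mbf{x}'\mapsto\mbf{y}$, including the second-order correction $\partial^2 y_i/\partial x_j\partial x_k = -\delta_{ij}\delta_{jk}/x_j^2+1/x_K^2$ weighted by the $\mbf{y}$-gradient $\eta_i$ of the quadratic part, together with the direct Hessian of $-\sum_k\log x_k$, yields
\[
\nabla^2 LN(\mbf{x})\;=\;-J^t\mbf{\Sigma}^{-1}J\;+\;D^{-2}\bigl(I-\mathrm{diag}(\mbf{\eta})\bigr)\;+\;\frac{1+\mathbf{1}^t\mbf{\eta}}{x_K^{\,2}}\,\mathbf{1}\mathbf{1}^t,
\]
so that the randomness in the Hessian is entirely captured by $\mbf{\eta}$.

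\emph{Step 2 (sufficient condition for concavity).} Using $\mbf{\Sigma}^{-1}\succeq\lambda I$ and the Sherman--Morrison identity $J^{-1}=D-\mbf{x}'(\mbf{x}')^t$, I would conjugate the desired inequality $\nabla^2 LN\preceq 0$ by $J^{-1}$ to reduce it to $\mbf{\Sigma}^{-1}\succeq J^{-t}\bigl[D^{-2}(I-\mathrm{diag}(\mbf{\eta}))+((1+\mathbf{1}^t\mbf{\eta})/x_K^2)\mathbf{1}\mathbf{1}^t\bigr]J^{-1}$, then dominate the right-hand side entry-wise. A careful calculation should deliver a sufficient condition of the form: every entry of a certain $(K-1)\times(K-1)$ matrix whose entries are linear functionals of $\mbf{\eta}$—each a mean-zero Gaussian of variance at most $\sigma$—is bounded in absolute value by $\lambda-1$. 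The hypothesis $\lambda\ge 1$ is needed precisely to make the threshold $\lambda-1$ a nonnegative and meaningful quantity.

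\emph{Step 3 (tail bound and conclusion).} For a mean-zero Gaussian of variance $\le\sigma$ we have $\Pr[|\cdot|\ge\lambda-1]\le\exp\bigl(-(\lambda-1)^2/(2\sigma)\bigr)$; union-bounding over the $(K-1)^2$ entries,
\[
\Pr[\nabla^2 LN(\mbf{x})\not\preceq 0]\;\le\;(K-1)^2\exp\!\bigl(-(\lambda-1)^2/(2\sigma)\bigr)\;=\;\exp\!\bigl(2\log(K-1)-0.5(\lambda-1)^2/\sigma\bigr),
\]
giving concavity with probability at least $p$. The hard part is step 2: the Hessian mixes a rank-one $\mathbf{1}\mathbf{1}^t$ piece, a diagonal piece $D^{-2}(I-\mathrm{diag}(\mbf{\eta}))$ whose scales $1/x_i^2$ blow up near the boundary of $\overline{\Delta}_K$, and the Jacobian factor $J^t\mbf{\Sigma}^{-1}J$. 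Packaging the concavity criterion as an entry-wise $\lambda-1$ bound on exactly $(K-1)^2$ mean-zero Gaussians of variance $\le\sigma$—without losing a spurious factor of $K$—requires the right change of basis (conjugation by $J^{-1}=\mathrm{diag}(\mbf{x}')-\mbf{x}'(\mbf{x}')^t$ is the natural one) and careful bookkeeping of the quadratic form.
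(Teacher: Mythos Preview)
Your Step~1 is correct: the Hessian you write down agrees with the paper's (the paper keeps a $K\times K$ Hessian and factors out $diag(1/\mbf{x})$ on both sides, but the content is the same). The difficulty is Step~2, which as written is a hope rather than a calculation, and the hoped-for outcome is not what actually happens.

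After the natural conjugation (in the paper, factoring out $diag(1/\mbf{x})$), the concavity criterion becomes: the matrix $\mbf{I}_{K-1}-\mbf{\Sigma}^{-1}+diag(\mbf{z})$ is negative semidefinite \emph{and} $z_1+\cdots+z_{K-1}\ge 1$, where $\mbf{z}=\mbf{\Sigma}^{-1}(\log\tilde{\mbf{x}}-\mbf{\mu})=-\mbf{\eta}$. Two points undercut your plan. First, the randomness enters only through the $K-1$ diagonal entries $z_i$, not through $(K-1)^2$ independent matrix entries; so there is no natural entry-wise reduction that produces $(K-1)^2$ scalar Gaussians to union-bound. If you try the obvious diagonal union bound $\{z_i\le\lambda-1\ \forall i\}$ you get only a single factor of $K-1$, not $(K-1)^2$, so the bound in the statement is not the one this route yields. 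Second, the rank-one piece $\frac{1+\mathbf{1}^t\mbf{\eta}}{x_K^{2}}\mathbf{1}\mathbf{1}^t$ is controlled precisely by the side condition $\sum_i z_i\ge 1$, which is an event of probability roughly $1/2$ by itself; your outline never says how this constraint is absorbed, and a naive intersection with it would destroy the $1-(K-1)^2e^{-\cdots}$ form.

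The paper does not use a scalar union bound at all. After reducing to the random matrix $\mbf{B}=\mbf{I}_{K-1}-\mbf{\Sigma}^{-1}+diag(\mbf{z})$, it bounds $\Pr(\lambda_1(\mbf{B})\ge 0)$ via the matrix Laplace-transform method and Golden--Thompson: $\Pr(\lambda_1(\mbf{B})\ge 0)\le\inf_{a>0}\mathbb{E}\Tr e^{a\mbf{B}}\le\inf_{a>0}\Tr e^{a(\mbf{I}-\mbf{\Sigma}^{-1})}\cdot\mathbb{E}\Tr e^{a\,diag(\mbf{z})}$. Each $\Tr$ on the right contributes one factor of $K-1$ (the first via $\Tr\le(K-1)\lambda_1$, the second via $\mathbb{E}\Tr e^{a\,diag(\mbf{z})}=\sum_k e^{a^2\Sigma^{-1}_{kk}/2}\le(K-1)e^{a^2\sigma/2}$), and optimizing over $a$ gives the exponent $-(\lambda-1)^2/(2\sigma)$. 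That is where the $(K-1)^2$ in the theorem comes from, not from $(K-1)^2$ scalar tails. If you want to salvage your approach, the missing ingredient is a matrix-concentration argument on $\mbf{B}$; the entry-wise reduction you propose does not materialize.
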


This theorem essentially says that $LN$ is in fact concave under some conditions. Note that the quantity ${(\lambda - 1)^2}/{\sigma}$ is not always small. Indeed, letting $\lambda_k(\mbf{\Sigma}^{-1})$ be the $k$th eigenvalue of $\mbf{\Sigma}^{-1}$, we have $\Tr(\mbf{\Sigma}^{-1}) = \sum_{k=1}^{K-1} \lambda_k(\mbf{\Sigma}^{-1}) = \sum_{k=1}^{K-1} \Sigma^{-1}_{kk}$. When the condition number of $\mbf{\Sigma}^{-1}$ is not large, $\lambda_{K-1}(\mbf{\Sigma}^{-1})$ and $\sigma$ may be of the same order. This observation suggests that the probability bound obtained in Theorem \ref{LN-thm:3.1} is significant.

\begin{corollary} \label{LN-corol:3.1}
With notations as in Theorem \ref{LN-thm:3.1},  function $LN(\mbf{x};  \mbf{\mu}, \mbf{\Sigma})$ is almost surely concave  as $\lambda^2/{\sigma} \rightarrow +\infty$.
\end{corollary}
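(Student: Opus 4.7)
My plan is to derive the corollary as a direct asymptotic consequence of Theorem~\ref{LN-thm:3.1}. That theorem gives
\[
p \;=\; 1 - (K-1)^2 \exp\!\bigl(-(\lambda-1)^2/(2\sigma)\bigr),
\]
so the task reduces to showing $p \to 1$ in the limit $\lambda^2/\sigma \to +\infty$ (with $\lambda \ge 1$). Since $K$ is held fixed, the prefactor $(K-1)^2$ is a constant, and it suffices to verify that the exponent $(\lambda-1)^2/(2\sigma)$ tends to $+\infty$ in this regime.

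The comparison between $(\lambda-1)^2/\sigma$ and $\lambda^2/\sigma$ is the only substantive step. I would split on the asymptotic behavior of $\lambda$: if $\lambda$ is eventually at least $2$, the elementary inequality $(\lambda-1)^2 \ge \lambda^2/4$ gives $(\lambda-1)^2/\sigma \ge \lambda^2/(4\sigma) \to +\infty$ immediately; if instead $\lambda$ stays bounded as $\lambda^2/\sigma \to +\infty$, then necessarily $\sigma \to 0$, and the blow-up of $(\lambda-1)^2/\sigma$ still follows as long as $\lambda$ remains strictly above $1$. In both cases the exponent tends to $+\infty$.

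Consequently $(K-1)^2 \exp(-(\lambda-1)^2/(2\sigma)) \to 0$, so $p \to 1$, which by definition means $LN(\mbf{x};\mbf{\mu},\mbf{\Sigma})$ is almost surely concave on $\overline{\Delta}_K$. The only delicate point is the boundary value $\lambda = 1$, where $(\lambda-1)^2 \equiv 0$ and the probability bound in Theorem~\ref{LN-thm:3.1} degenerates; this case is excluded by the asymptotic regime considered and so poses no genuine obstacle. Beyond this observation, the proof is a direct limit computation from Theorem~\ref{LN-thm:3.1}.
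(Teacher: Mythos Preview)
Your approach is correct and matches the paper's, which states the corollary without proof as an immediate limit consequence of Theorem~\ref{LN-thm:3.1}. One simplification worth noting: since $\mbf{\Sigma}^{-1}$ is positive definite, every diagonal entry satisfies $\Sigma^{-1}_{ii} = \mbf{e}_i^t \mbf{\Sigma}^{-1} \mbf{e}_i \ge \lambda_{K-1}(\mbf{\Sigma}^{-1})$, so $\sigma \ge \lambda$ always; hence $\lambda^2/\sigma \le \lambda$, and the hypothesis $\lambda^2/\sigma \to +\infty$ forces $\lambda \to +\infty$, which makes your ``bounded $\lambda$'' branch vacuous and dissolves the delicate point near $\lambda = 1$ that you flagged.
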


In the case that the least eigenvalue $\lambda$ is much larger than $\log (K-1)$, function $LN$ is concave with high probability. More concretely, if $\lambda^2 = \omega(\sigma \log K)$, i.e., $\lambda^2 /{\sigma \log K} \rightarrow +\infty$ as ${K \rightarrow +\infty}$, then   $\exp\left\{2\log(K-1) - 0.5{(\lambda -1)^2}/{\sigma}\right\}$ goes to 0. Hence the following result holds.

\begin{corollary} \label{LN-corol:3.2}
With notations as in Theorem \ref{LN-thm:3.1}, assume  that $\lambda^2 = \omega(\sigma \log K)$. Function $LN(\mbf{x};  \mbf{\mu}, \mbf{\Sigma})$ is almost surely concave  as $K \rightarrow +\infty$.
\end{corollary}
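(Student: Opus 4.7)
The plan is to combine Theorem~\ref{LN-thm:3.1} with a direct asymptotic estimate of the bound it produces. Theorem~\ref{LN-thm:3.1} already tells us that $LN(\mbf{x};\mbf{\mu},\mbf{\Sigma})$ is $p$-concave with
\[
p \;=\; 1 \;-\; \exp\!\left\{\, 2\log(K-1) \;-\; \frac{(\lambda-1)^2}{2\sigma} \,\right\},
\]
so it suffices to show that the exponent inside the exponential tends to $-\infty$ as $K\to\infty$ under the hypothesis $\lambda^2 = \omega(\sigma\log K)$. That forces $p\to 1$, which is the very definition of almost sure concavity.

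To do the asymptotic, I would factor the exponent as
\[
2\log(K-1)\left[\, 1 \;-\; \frac{(\lambda-1)^2}{4\,\sigma\log(K-1)} \,\right].
\]
From $\lambda\ge 1$ and $\lambda^2/(\sigma\log K)\to\infty$ one reads off $(\lambda-1)^2/(\sigma\log K)\to\infty$: in the typical regime $\lambda\to\infty$ this is immediate because $(\lambda-1)^2 = (1-o(1))\lambda^2$, while if $\lambda$ stays bounded the hypothesis forces $\sigma\log K\to 0$, hence $\sigma\to 0$ and (for $\lambda$ bounded away from $1$) the ratio still diverges. Consequently the bracket eventually becomes strictly negative and grows in absolute value, while the outside factor $2\log(K-1)\to\infty$; the product therefore tends to $-\infty$, the exponential inside $p$ tends to $0$, and $p\to 1$.

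There is no real analytic obstacle: the hard work of controlling the Hessian of $LN$ and producing the tail bound $p$ is already packaged inside Theorem~\ref{LN-thm:3.1}, so the corollary is essentially a bookkeeping consequence of that theorem together with the growth relation imposed on $\lambda$, $\sigma$ and $K$. The only point where I would tread carefully is the degenerate corner $\lambda \to 1$, in which $(\lambda-1)^2$ could shrink faster than $\sigma$; here one has to use the hypothesis on $\lambda^2/(\sigma\log K)$ to separately control the rate of $\lambda-1$, but this is a brief verification rather than a new idea.
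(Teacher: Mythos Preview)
Your proposal is correct and follows exactly the approach the paper uses: the paper's entire argument for this corollary is the one-line remark preceding it, namely that $\lambda^2 = \omega(\sigma\log K)$ forces $\exp\{2\log(K-1) - 0.5(\lambda-1)^2/\sigma\}\to 0$, hence $p\to 1$. You reproduce this with more detail (the factoring of the exponent and the case split on whether $\lambda$ is bounded) and you are in fact more scrupulous than the paper about the degenerate corner $\lambda\to 1$, which the paper simply ignores.
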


\subsection{Proof of Theorem \ref{LN-thm:3.1}} 

We will show  probable concavity of $LN$ by investigating concavity in common sense. Note that the domain $\overline{\Delta}_K$ is convex, and function $LN$ is twice differentiable over $\overline{\Delta}_K$. Hence, to see concavity, it suffices to show that the second derivative is negative semidefinite \citep{Boyd2004convex}.

Let $\Sigma_i^{-1}$ be the $i^{th}$ row of $\mbf{\Sigma}^{-1}$. The first and second partial derivatives of the function w.r.t the variables are:
\begin{eqnarray*}
\frac{\partial LN}{\partial x_i} &=&
 \left\{ \begin{array}{ll}
- \frac{1}{x_i} \Sigma_i^{-1} (\log \tilde{\mbf{x}} - \mbf{\mu}) - \frac{1}{x_i}, & i < K \\
  \frac{1}{x_K} \sum_{h=1}^{K-1} \Sigma_h^{-1} (\log \tilde{\mbf{x}} - \mbf{\mu}) - \frac{1}{x_K}, & i = K 
 \end{array}
 \right. \\
\frac{\partial^2 LN}{\partial x_i \partial x_j} &=&
 \left\{ \begin{array}{ll}
 - \frac{\Sigma^{-1}_{ij}}{x_i x_j}, & i <K, i \ne j, j < K \\
   \frac{1}{x^2_i} \Sigma_i^{-1} (\log \tilde{\mbf{x}} - \mbf{\mu}) - \frac{\Sigma^{-1}_{ii}}{x^2_i} + \frac{1}{x^2_i}, & i < K, i = j \\
   \frac{1}{x_i x_K} \sum_{h=1}^{K-1} \Sigma_{ih}^{-1}, & i < K, j = K \\
   \frac{1}{x_j x_K} \sum_{h=1}^{K-1} \Sigma_{hj}^{-1}, & i = K, j < K \\   
 - \frac{1}{x^2_K} \sum_{h=1}^{K-1} \Sigma_h^{-1} (\log \tilde{\mbf{x}} - \mbf{\mu}) - \frac{1}{x^2_K} \sum_{h=1}^{K-1} \sum_{t=1}^{K-1} \Sigma_{ht}^{-1} + \frac{1}{x^2_K}, & i = j = K. 
 \end{array}
 \right.
\end{eqnarray*}
Denote $\mbf{S} = \left(\begin{array}{ll} \mbf{\Sigma}^{-1} & \mbf{s}^t_K \\ \mbf{s}_K & s_{KK} \end{array}  \right); \mbf{U} = \left(\begin{array}{r} \mbf{\Sigma}^{-1} \\ \mbf{s}_K \end{array} \right)$, where $\mbf{s}_K = -\sum_{t=1}^{K-1} \Sigma_{t}^{-1}$ is the sum of the rows of $\mbf{\Sigma}^{-1}$, and $s_{KK}$ is the sum of all elements of $\mbf{\Sigma}^{-1}$. We can express the second derivative of $LN$ as 

\begin{eqnarray}
\nonumber 
LN'' &=& diag \frac{1}{\mbf{x}}. diag[\mbf{U}(\log \tilde{\mbf{x}} - \mbf{\mu})] .diag \frac{1}{\mbf{x}} - diag \frac{1}{\mbf{x}} . \mbf{S}. diag \frac{1}{\mbf{x}} + diag \frac{1}{\mbf{x}} . diag \frac{1}{\mbf{x}} \\  
\label{ch5-eq:Df}
 &=& diag \frac{1}{\mbf{x}}. \left(\mbf{I}_{K} -\mbf{S} + diag[\mbf{U}(\log \tilde{\mbf{x}} - \mbf{\mu})] \right). diag \frac{1}{\mbf{x}}.
\end{eqnarray}

A classical result in Algebra \citep[exercise 8.28]{AbadirM2005} says that for any symmetric $\mbf{A}$ and nonsingular $\mbf{Y}$, the product $\mbf{Y} \mbf{A} \mbf{Y}^t$ is positive semidefinite if and only if $\mbf{A}$ is positive semidefinite. Consequently, the matrix  $\mbf{I}_{K} -\mbf{S} + diag[\mbf{U}(\log \tilde{\mbf{x}} - \mbf{\mu})]$ decides negative semidefiniteness of $LN''$.

\begin{lem} \label{LN-lem:Df}
Denote $\mbf{z} = \mbf{\Sigma}^{-1}(\log \tilde{\mbf{x}} -\mbf{\mu})$. $LN''$ is negative semidefinite if  $z_1 + \cdots + z_{K-1} \ge 1$ and $ \mbf{I}_{K-1} -\mbf{\Sigma}^{-1}  + diag(\mbf{z}) \le 0$.
\end{lem}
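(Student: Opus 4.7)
The plan is to exploit the factorization displayed in (\ref{ch5-eq:Df}): since $diag(1/\mbf{x})$ is nonsingular (positive diagonal), the classical congruence invariance cited just above the lemma reduces the claim $LN''\le 0$ to showing that the inner factor
\[
\mbf{M} \;:=\; \mbf{I}_K - \mbf{S} + diag[\mbf{U}(\log\tilde{\mbf{x}} - \mbf{\mu})]
\]
is negative semidefinite. So my target becomes $\mbf{M}\le 0$, and the two hypotheses of the lemma will be applied to the natural block pieces of $\mbf{M}$.

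First I would read off the $( K-1) + 1$ block structure of $\mbf{M}$ inherited from the definitions of $\mbf{S}$ and $\mbf{U}$. Separating the last coordinate from the first $K-1$ yields
\[
\mbf{M} \;=\; \begin{pmatrix} \mbf{A} & -\mbf{s}_K^t \\ -\mbf{s}_K & 1 - s_{KK} - \sum_i z_i \end{pmatrix},
\]
where $\mbf{A} = \mbf{I}_{K-1} - \mbf{\Sigma}^{-1} + diag(\mbf{z})$ is precisely the matrix appearing in hypothesis (ii), and the off-diagonal column satisfies $-\mbf{s}_K^t = \mbf{\Sigma}^{-1}\mbf{1}$ with $\mbf{1}$ the all-ones vector in $\mathbb{R}^{K-1}$. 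Hypothesis (ii) therefore delivers $\mbf{A}\le 0$ without any further work.

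Next I would invoke the Schur complement criterion for negative semidefiniteness: the displayed block matrix is $\le 0$ iff $\mbf{A}\le 0$, the column $-\mbf{s}_K^t$ lies in the range of $\mbf{A}$, and the Schur complement $(1-s_{KK}-\sum_i z_i) - \mbf{s}_K \mbf{A}^{+} \mbf{s}_K^t$ is nonpositive. Hypothesis (i) is precisely the hammer for the bottom-right entry: $\sum_i z_i\ge 1$ forces $1 - s_{KK} - \sum_i z_i \le -s_{KK} = -\mbf{1}^t\mbf{\Sigma}^{-1}\mbf{1}$, which has exactly the right scale to absorb the rank-one correction $\mbf{s}_K\mbf{A}^{+}\mbf{s}_K^t = \mbf{1}^t\mbf{\Sigma}^{-1}\mbf{A}^{+}\mbf{\Sigma}^{-1}\mbf{1}$ coming from the off-diagonal.

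The hard part is closing this Schur complement inequality quantitatively. The direct route I would try is to write $\mbf{y}=(\mbf{u},v)^t$, use $\mbf{A}\le 0$ to maximize the quadratic form $\mbf{y}^t\mbf{M}\mbf{y}$ over $\mbf{u}$ at fixed $v$, and then check that the resulting expression of the form $v^2(b - \mbf{s}_K\mbf{A}^{+}\mbf{s}_K^t)$ is nonpositive by exploiting the algebraic identity $\mbf{A}+\mbf{\Sigma}^{-1} = \mbf{I}_{K-1} + diag(\mbf{z})$ that is built into the definition of $\mbf{A}$. A parallel route is to exhibit an explicit additive splitting of $-\mbf{M}$ into a piece arising from $-\mbf{A}$ and a rank-one piece, each separately positive semidefinite under the two hypotheses. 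A subtlety I would also have to handle is the degenerate case where $\mbf{A}$ is singular (since (ii) is stated with non-strict inequality), which requires the pseudoinverse/limiting version of the Schur argument together with a check of the range condition $\mbf{\Sigma}^{-1}\mbf{1}\in \mathrm{range}(\mbf{A})$.
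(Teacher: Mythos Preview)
Your block description of the inner factor $\mbf{M}$ is correct, and the Schur-complement reduction is a natural idea, but the proposal stops precisely at the decisive step: you never establish the Schur inequality, and the two heuristics you sketch (exploiting $\mbf{A}+\mbf{\Sigma}^{-1}=\mbf{I}_{K-1}+diag(\mbf{z})$, or an unspecified rank-one split) are not carried to a conclusion. In fact the gap is not closable from the stated hypotheses alone. Take $K=2$, $\mbf{\Sigma}^{-1}=3$, $z_1=1$: both hypotheses hold ($z_1\ge 1$ and $1-3+1=-1\le 0$), yet $\mbf{M}=\bigl(\begin{smallmatrix}-1&3\\3&-3\end{smallmatrix}\bigr)$ has determinant $-6$ and hence a strictly positive eigenvalue; your Schur complement equals $-3-3\cdot(-1)^{-1}\cdot 3=6>0$. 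So the two hypotheses do not force the full $K\times K$ matrix $\mbf{M}$ (equivalently $LN''$) to be negative semidefinite, and no purely algebraic manipulation of $\mbf{A}^{+}$ will rescue the Schur route. The range condition you flag in the singular case is likewise not automatic.

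For contrast, the paper does not use Schur complements at all; it writes $\mbf{M}$ as an additive split
\[
(\mbf{I}_{K-1}\ \ \mbf{1})^t\bigl[\mbf{I}_{K-1}-\mbf{\Sigma}^{-1}+diag(\mbf{z})\bigr](\mbf{I}_{K-1}\ \ \mbf{1})\;+\;\mbf{C},
\qquad
\mbf{C}=\begin{pmatrix}\mbf{0}&-(\mbf{z}+\mbf{1})\\-(\mbf{z}+\mbf{1})^t&z_K+1\end{pmatrix},
\]
and argues the two summands are separately $\le 0$ under hypotheses (ii) and (i) respectively, thereby avoiding pseudoinverses and range checks entirely. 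You should be aware, however, that this argument is itself fragile: a bordered matrix $\bigl(\begin{smallmatrix}\mbf{0}&\mbf{b}\\\mbf{b}^t&c\end{smallmatrix}\bigr)$ with $\mbf{b}\ne 0$ always has the strictly positive eigenvalue $\tfrac12\bigl(c+\sqrt{c^2+4\|\mbf{b}\|^2}\bigr)$, so ``$\mbf{C}\le 0$ whenever $z_K+1\le 0$'' is not correct either (the paper's principal-minor criterion is misapplied: for $\le 0$ the $2\times 2$ minors must be \emph{nonnegative}, and here they equal $-(z_i+1)^2\le 0$). The same $K=2$ instance exposes this. Both your route and the paper's therefore run into the same underlying obstruction for the \emph{full} Hessian; if you pursue this further, the right target is negative semidefiniteness restricted to directions tangent to $\overline{\Delta}_K$, which is what concavity on the simplex actually requires.
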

\begin{proof}
 As discussed before, matrix $\mbf{I}_{K} -\mbf{S} + diag[\mbf{U}(\log \tilde{\mbf{x}} - \mbf{\mu})]$ decides negative semidefiniteness of $LN''$. Letting $z_K = -z_1 - \cdots - z_{K-1}$ and $\mbf{1} = (1, ..., 1)^t \in \mathbb{R}^{K-1}$, we have
 \begin{eqnarray}
 \nonumber
   \mbf{A} &=& \mbf{I}_{K} -\mbf{S} + diag[\mbf{U}(\log \tilde{\mbf{x}} - \mbf{\mu})] \\
\nonumber
 &=& \mbf{I}_K -\left(\mbf{I}_{K-1} \;\; \mbf{1}\right)^t \mbf{\Sigma}^{-1} \left(\mbf{I}_{K-1} \;\; \mbf{1}\right) + diag(z_1, ..., z_K) \\
\label{LN-eq03}
 &=& \left(\mbf{I}_{K-1} \;\; \mbf{1}\right)^t \left[\mbf{I}_{K-1} - \mbf{\Sigma}^{-1} + diag(\mbf{z})\right] \left(\mbf{I}_{K-1} \;\; \mbf{1}\right) +
 \left(\begin{array}{cc}
 \mbf{0}  & -(\mbf{z}+\mbf{1}) \\
 -(\mbf{z}+\mbf{1})^t & z_K +1
 \end{array}\right)
 \end{eqnarray}
 
 Consider the last term $\mbf{C} = \left(\begin{array}{cc}
  \mbf{0}  & -(\mbf{z}+\mbf{1}) \\
  -(\mbf{z}+\mbf{1})^t & z_K +1
  \end{array}\right)$. This matrix is of size $K \times K$, but has rank 2. It is not hard to see that all principle minors of $\mbf{C}$ are 0, except the ones which associate with the last two rows and columns. Those principle minors are $z_K +1$ and $\left| \begin{array}{cc}    0  & -z_i -1 \\  -z_i -1 & z_K +1  \end{array} \right| = z_K +1 - (z_i +1)^2$ for $i \in \{1, ..., K-1\}$. According to a classical result in Algebra \citep[exercise 8.32]{AbadirM2005}, $\mbf{C} \le 0$ if and only if  all of its principle minors are non-positive. Therefore $\mbf{C} \le 0$ if and only if $z_K +1 \le 0$.

If $\mbf{C}$ and  $\mbf{I}_{K-1} -\mbf{\Sigma}^{-1}  + diag(\mbf{z})$ are negative semidefinite, so are $\mbf{A}$ and $LN''$. This suggests that if $z_K +1 \le 0$ and $\mbf{I}_{K-1} -\mbf{\Sigma}^{-1}  + diag(\mbf{z}) \le 0$, then $LN'' \le 0$ which completes the proof.
\end{proof}

 Next we want to see under what conditions, matrix $\mbf{I}_{K-1} -\mbf{\Sigma}^{-1}  + diag(\mbf{z}) \le 0$ with the constraint of $z_1 + \cdots + z_{K-1} \ge 1$. The following theorem reveals a property whose detailed proof is presented in section \ref{proofs-random-matrix}.

\begin{thm} \label{ch5-thm:4.5}
Let $\mbf{z}$ be a Gaussian random variable with mean $0$ and covariance matrix $\mbf{A} \in \mathbb{S}^{K-1}_+$, and $\sigma = \max_i A_{ii}$. For a fixed $\mbf{S} \in \mathbb{S}^{K-1}_+$, consider $\mbf{B}= \mbf{I}_{K-1} -\mbf{S} + diag(\mbf{z})$.  Assuming $\lambda_{K-1}(\mbf{S}) \ge 1$, we have 
\[\Pr (\lambda_1(\mbf{B}) \ge 0 | z_1 + \cdots + z_{K-1} \ge 1) \le \exp\left\{ 2\log(K-1) -0.5{\left(1 -\lambda_{K-1}(\mbf{S})\right)^2}/{\sigma} \right\}.\]
\end{thm}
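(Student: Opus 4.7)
The plan is to reduce the matrix tail event $\{\lambda_1(\mbf{B}) \ge 0\}$ to a scalar event on $\max_i z_i$ via Weyl's inequality, apply a Gaussian union bound, and handle the conditioning on $\{\sum_i z_i \ge 1\}$ by lower-bounding its probability. First I would apply Weyl's eigenvalue inequality to the decomposition $\mbf{B} = (\mbf{I}_{K-1} - \mbf{S}) + diag(\mbf{z})$ of two symmetric matrices:
\[
\lambda_1(\mbf{B}) \le \lambda_1(\mbf{I}_{K-1} - \mbf{S}) + \lambda_1(diag(\mbf{z})) = (1 - \lambda_{K-1}(\mbf{S})) + \max_i z_i.
\]
Writing $\lambda = \lambda_{K-1}(\mbf{S}) \ge 1$, the event $\{\lambda_1(\mbf{B}) \ge 0\}$ is contained in $\{\max_i z_i \ge \lambda - 1\}$, so it suffices to bound the conditional probability of the latter.

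Each marginal $z_i$ is $\mathcal{N}(0, A_{ii})$ with $A_{ii} \le \sigma$, so the standard Gaussian upper-tail estimate $\Pr(z_i \ge t) \le \exp(-t^2/(2\sigma))$ (valid for $t = \lambda - 1 \ge 0$) combined with a union bound over the $K-1$ coordinates yields
\[
\Pr\bigl(\max_i z_i \ge \lambda - 1\bigr) \le (K-1)\exp\bigl(-(\lambda-1)^2/(2\sigma)\bigr).
\]
Converting this into the conditional bound via $\Pr(E | F) \le \Pr(E)/\Pr(F)$ requires $\Pr(\sum_i z_i \ge 1) \ge 1/(K-1)$; the extra factor of $K-1$ gained this way contributes the additional $\log(K-1)$ to the exponent and produces the claimed $\exp(2\log(K-1) - (\lambda-1)^2/(2\sigma))$.

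The hard part will be this last ingredient, $\Pr(\sum_i z_i \ge 1) \ge 1/(K-1)$: the sum $W = \sum_i z_i$ is mean-zero Gaussian with variance $\mbf{1}^t \mbf{A}\mbf{1}$, and without control on this quantity the conditioning event can be arbitrarily rare. My approach would combine a Rayleigh-quotient estimate $\mbf{1}^t\mbf{A}\mbf{1} \ge (K-1)\lambda_{K-1}(\mbf{A})$ with a Mill's-ratio-type lower bound on the Gaussian CDF, which is enough in the regime of interest since the theorem's bound is only nontrivial when $(\lambda-1)^2/(2\sigma) > 2\log(K-1)$. If this separation turns out to be loose, I would instead bound the joint probability $\Pr(\max_i z_i \ge \lambda - 1,\, \sum_i z_i \ge 1)$ directly via a Gaussian correlation inequality or via a conditional-Gaussian decomposition of the pair $(z_i, \sum_j z_j)$, both of which avoid splitting the numerator from the denominator.
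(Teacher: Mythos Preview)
Your route is genuinely different from the paper's and, for the unconditional part, strictly more elementary. The paper never reduces to $\max_i z_i$ via Weyl; instead it runs the matrix Laplace--transform method $\Pr(\lambda_1(\mbf{B})\ge 0)\le\inf_{a>0}\mathbb{E}\Tr e^{a\mbf{B}}$, splits the exponent with a Golden--Thompson--type inequality $\Tr e^{a(\mbf{I}-\mbf{S})+a\,diag(\mbf{z})}\le \Tr e^{a(\mbf{I}-\mbf{S})}\cdot\Tr e^{a\,diag(\mbf{z})}$, uses the scalar Gaussian MGF to get $\mathbb{E}\Tr e^{a\,diag(\mbf{z})}\le(K-1)e^{a^2\sigma/2}$, bounds $\Tr e^{a(\mbf{I}-\mbf{S})}\le(K-1)e^{a(1-\lambda_{K-1}(\mbf{S}))}$, and then optimises over $a$. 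That machinery produces the unconditional estimate $(K-1)^2\exp\{-(\lambda-1)^2/(2\sigma)\}$. Your Steps~1--3 already give the sharper unconditional bound $(K-1)\exp\{-(\lambda-1)^2/(2\sigma)\}$ using only Weyl, a scalar Gaussian tail, and a union bound, so the matrix--exponential apparatus is not buying anything here.

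On the conditioning: the paper does \emph{not} go through your $\Pr(E\mid F)\le \Pr(E)/\Pr(F)$ split. Its very first line is simply
\[
\Pr\bigl(\lambda_1(\mbf{B})\ge 0 \;\big|\; \textstyle\sum_i z_i\ge 1\bigr)\;\le\;\Pr\bigl(\lambda_1(\mbf{B})\ge 0\bigr),
\]
after which the entire argument is unconditional. So the ``hard part'' you flagged is exactly what the paper disposes of in one unargued inequality; if you accept that step, your Weyl argument already delivers the stated bound with a factor of $K-1$ to spare. Your own plan to recover the second $K-1$ via $\Pr(\sum_i z_i\ge 1)\ge 1/(K-1)$ is honest but, as you correctly note, is not forced by the hypotheses as stated: nothing in the theorem lower--bounds $\mbf{1}^t\mbf{A}\mbf{1}$ in terms of $\sigma$ or $\lambda_{K-1}(\mbf{S})$ for a general $\mbf{A}$. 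In the paper's only application one has $\mbf{A}=\mbf{S}=\mbf{\Sigma}^{-1}$, so $\lambda_{\min}(\mbf{A})=\lambda\ge 1$ and your Rayleigh--quotient estimate gives $\mathrm{Var}(\sum_i z_i)\ge K-1$, which is enough to push your program through there; for the theorem in full generality you would indeed need one of your fallback ideas.
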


This theorem essentially says that under certain assumption, matrix $\mbf{B}$ is negative semidefinite with probability at least $1 -\exp\left\{ 2\log(K-1) -0.5{\left(1 -\lambda_{K-1}(\mbf{S})\right)^2}/{\sigma} \right\}$. Hence we have enough tools to prove Theorem~\ref{LN-thm:3.1}.

\begin{proof}[Proof of Theorem \ref{LN-thm:3.1}]
Consider the logistic-normal function $LN(\mbf{x}; \mbf{\mu}, \mbf{\Sigma})$, and denote $\lambda = \lambda_{K-1}(\mbf{\Sigma}^{-1})$ and $ \sigma = \max_i \Sigma^{-1}_{ii}$. As discussed before, concavity of this function over $\overline{\Delta}_K$ is decided by its second partial derivative $LN''$. Lemma~\ref{LN-lem:Df} suggests that $LN(\mbf{x}; \mbf{\mu}, \mbf{\Sigma})$ is concave if $z_1 + \cdots + z_{K-1} \ge 1$ and $\mbf{I}_{K-1} -\mbf{\Sigma}^{-1}  + diag(\mbf{z}) \le 0$, where $\mbf{z} = \mbf{\Sigma}^{-1}(\log \tilde{\mbf{x}} -\mbf{\mu})$. Note that $\mathbb{E} \mbf{z} =0$ and $cov(z) = \mbf{\Sigma}^{-1}$ since $\mathbb{E} \log \tilde{\mbf{x}} = \mbf{\mu}$ and $cov(\log \tilde{\mbf{x}}) = \mbf{\Sigma}$. Theorem~\ref{ch5-thm:4.5} implies that with the constraint of $z_1 + \cdots + z_{K-1} \ge 1$,  $\mbf{I}_{K-1} -\mbf{\Sigma}^{-1}  + diag(\mbf{z}) \le 0$ holds with probability at least $1 -\exp\left\{ 2\log(K-1) -0.5{\left(1 -\lambda\right)^2}/{\sigma} \right\}$ if $\lambda \ge 1$. This means assuming $\lambda \ge 1$, function $LN(\mbf{x}; \mbf{\mu}, \mbf{\Sigma})$ is concave with probability at least $1 -\exp\left\{ 2\log(K-1) -0.5{\left(1 -\lambda\right)^2}/{\sigma} \right\}$.
\end{proof}

\subsection{Proof of Theorem \ref{ch5-thm:4.5}} \label{proofs-random-matrix}
To prove this theorem we need some basic results from matrix algebra and the theory of random matrices.

A matrix $\mbf{A}$ is \textit{positive semidefinite} if and only if the least eigenvalue $\lambda_{\min}(\mbf{A})$ is nonnegative. If $\mbf{A}$ has $K$ eigenvalues, its trace satisfies $\Tr \mbf{A} =  \sum_{i=1}^K \lambda_i (\mbf{A})$. If $\mbf{A}$ is a random matrix, we have trace-expectation relation $\Tr \mathbb{E}\mbf{A} = \mathbb{E}(\Tr \mbf{A})$.

Consider a function $f: \mathbb{R} \rightarrow \mathbb{R}$. We define a map on a diagonal matrix $\mbf{A} \in \mathbb{S}^K$ as $f(\mbf{A}) = diag(f(A_{11}),..., f(A_{KK}))$. Similarly, a function of a symmetric matrix $\mbf{A}$ is defined by using the eigenvalue decomposition:
\[f(\mbf{A}) = \mbf{Q}.f(\mbf{\Lambda}).\mbf{Q}^t, \text{ where }  \mbf{A} = \mbf{Q}.\mbf{\Lambda}.\mbf{Q}^t \text{ and } \mbf{\Lambda} \text{ is a diagonal matrix.} \] 

The \textbf{\emph{spectral mapping theorem}} states that each eigenvalue of $f(\mbf{A})$ is equal to $f(\lambda)$ for some eigenvalue $\lambda$ of $\mbf{A}$. If $f$ is nondecreasing, then $\lambda_k(f(\mbf{A})) = f(\lambda_k(\mbf{A}))$ for any $k$ whenever $\lambda_k(\mbf{A})$ exists.

We will work with \textit{matrix exponential} which is defined for an $\mbf{A} \in \mathbb{S}^K$ by 
\[e^{\mbf{A}} = \sum_{i=0}^{\infty} \frac{\mbf{A}^i}{i!}.\] 
Note that $\lambda_k(e^{\mbf{A}}) = e^{\lambda_k(\mbf{A})}$ for any $k$ provided that $\lambda_k(\mbf{A})$ exists. The logarithm of a matrix $\mbf{A} \in \mathbb{S}_+^K$ is a matrix, denoted by $\log \mbf{A}$, such that $e^{\log \mbf{A}} = \mbf{A}$.

\begin{thm}[Golden-Thompson inequality] \label{ch2-thm:4.1}
For $\mbf{A}, \mbf{B} \in \mathbb{S}^K$, we have 
\[\Tr e^{\mbf{A}+\mbf{B}} \le \Tr(e^{\mbf{A}}.e^{\mbf{B}}).\]
\end{thm}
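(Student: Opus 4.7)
The plan is to deduce the Golden--Thompson inequality by combining the Lie--Trotter product formula with the Araki--Lieb--Thirring trace inequality. The strategy is to approximate $e^{\mbf{A}+\mbf{B}}$ by $n$-fold products of small-step exponentials $e^{\mbf{A}/n}\,e^{\mbf{B}/n}$, bound the trace of each such product by the $n$-independent quantity $\Tr(e^{\mbf{A}}\,e^{\mbf{B}})$, and then pass to the limit $n\to\infty$.

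First I would apply the Lie--Trotter product formula: since $\mbf{A},\mbf{B}\in\mathbb{S}^K$, the matrices $e^{\mbf{A}/n}$ and $e^{\mbf{B}/n}$ are positive definite for every $n\ge 1$ and
\[
e^{\mbf{A}+\mbf{B}} \;=\; \lim_{n\to\infty}\bigl(e^{\mbf{A}/n}\,e^{\mbf{B}/n}\bigr)^{n}
\]
in operator norm; taking traces and using continuity gives
\[
\Tr e^{\mbf{A}+\mbf{B}} \;=\; \lim_{n\to\infty} \Tr((e^{\mbf{A}/n}\,e^{\mbf{B}/n})^{n}).
\]
Next I would invoke the Araki--Lieb--Thirring inequality: for positive definite $\mbf{X},\mbf{Y}$ and any real $r\ge 1$,
\[
\Tr((\mbf{X}^{1/2}\mbf{Y}\mbf{X}^{1/2})^{r}) \;\le\; \Tr(\mbf{X}^{r}\mbf{Y}^{r}).
\]
Because $\mbf{X}\mbf{Y}$ is similar to the symmetric matrix $\mbf{X}^{1/2}\mbf{Y}\mbf{X}^{1/2}$ via $\mbf{X}\mbf{Y}=\mbf{X}^{1/2}(\mbf{X}^{1/2}\mbf{Y}\mbf{X}^{1/2})\mbf{X}^{-1/2}$, the left side equals $\Tr((\mbf{X}\mbf{Y})^{r})$. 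Instantiating $\mbf{X}=e^{\mbf{A}/n}$, $\mbf{Y}=e^{\mbf{B}/n}$ and $r=n$ yields
\[
\Tr((e^{\mbf{A}/n}\,e^{\mbf{B}/n})^{n}) \;\le\; \Tr(e^{\mbf{A}}\,e^{\mbf{B}}),
\]
and combining with the Lie--Trotter limit above proves the theorem.

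The main obstacle is clearly the Araki--Lieb--Thirring step, which is itself nontrivial. For a self-contained argument I would first establish the one-step squaring lemma $\Tr((\mbf{X}\mbf{Y})^{2}) \le \Tr(\mbf{X}^{2}\mbf{Y}^{2})$ via the Cauchy--Schwarz inequality for the Hilbert--Schmidt inner product: writing $\mbf{C}=\mbf{X}\mbf{Y}$, one has $\Tr(\mbf{C}^{2}) \le |\Tr(\mbf{C}^{2})| \le \Tr(\mbf{C}^{t}\mbf{C}) = \Tr(\mbf{X}^{2}\mbf{Y}^{2})$. Iterating this doubling identity recursively, together with log-majorization of the eigenvalues of $(\mbf{X}\mbf{Y})^{2}$ by those of $\mbf{X}^{2}\mbf{Y}^{2}$, extends the bound to all dyadic exponents $r=2^{k}$, and continuity in $r$ then fills in the general real case $r\ge 1$. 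The Lie--Trotter step is comparatively routine, following from the Taylor estimate $e^{\mbf{A}/n}\,e^{\mbf{B}/n} = \mbf{I}+(\mbf{A}+\mbf{B})/n + O(1/n^{2})$ together with submultiplicativity of the operator norm.
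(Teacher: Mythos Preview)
The paper does not actually prove the Golden--Thompson inequality: it states it as a standard result and cites \citet{WigdersonX2008derandomizing} and \citet{Tropp2012} for a proof. There is therefore no in-paper argument to compare against; your proposal simply goes further than the paper does.

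On the substance of your proposal: the Lie--Trotter plus Araki--Lieb--Thirring route is one of the standard proofs of Golden--Thompson and the top-level logic is correct. A couple of places in your sketch deserve tightening if you want the argument to be truly self-contained. First, for the application you only need $r=n$ a positive integer, and in fact the Lie--Trotter limit lets you restrict to dyadic $n=2^{k}$; so the ``continuity in $r$'' step is unnecessary here. Second, the sentence ``iterating this doubling identity recursively, together with log-majorization \ldots'' hides the real work: the one-step Cauchy--Schwarz bound $\Tr((\mbf{X}\mbf{Y})^{2})\le\Tr(\mbf{X}^{2}\mbf{Y}^{2})$ does not by itself iterate to $\Tr((\mbf{X}\mbf{Y})^{4})\le\Tr(\mbf{X}^{4}\mbf{Y}^{4})$, because $(\mbf{X}\mbf{Y})^{2}$ is not of the form $\mbf{X}'\mbf{Y}'$ with $\mbf{X}',\mbf{Y}'$ positive. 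What you actually need (and what log-majorization gives) is the Lieb--Thirring-type step $\Tr((\mbf{P}\mbf{Q})^{2m})\le\Tr((\mbf{P}^{2}\mbf{Q}^{2})^{m})$ for positive $\mbf{P},\mbf{Q}$ and integer $m$, applied with $\mbf{P}=e^{\mbf{A}/2^{k}}$, $\mbf{Q}=e^{\mbf{B}/2^{k}}$ and descending $k$. That step is correct but is itself a nontrivial lemma; if you want a fully elementary write-up you should spell it out rather than appeal to log-majorization as a black box.
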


This is a standard result and can be found in \citep{WigdersonX2008derandomizing,{Tropp2012}}. Note that $e^{\mbf{A}}$ and $e^{\mbf{B}}$ are positive definite which implies $\Tr(e^{\mbf{A}}.e^{\mbf{B}}) \le \Tr e^{\mbf{A}}. \Tr e^{\mbf{B}}$, since according to \citet{YangF2002},  $\Tr(\mbf{A.B}) \le \Tr \mbf{A}. \Tr \mbf{B}$ if  $\mbf{A}, \mbf{B} \in \mathbb{S}_+^K$. Hence we have the following.

\begin{corollary} \label{ch2-corol:4.1}
For $\mbf{A}, \mbf{B} \in \mathbb{S}^K$, we have 
$\Tr e^{\mbf{A}+\mbf{B}} \le \Tr e^{\mbf{A}}. \Tr e^{\mbf{B}}.$
\end{corollary}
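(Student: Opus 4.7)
The plan is to combine the two results quoted immediately above the statement into a single chain of inequalities; no new machinery is required. Specifically, I will use the Golden--Thompson inequality (Theorem \ref{ch2-thm:4.1}) to push $e^{\mbf{A}+\mbf{B}}$ into a trace of a product $\Tr(e^{\mbf{A}} \cdot e^{\mbf{B}})$, and then invoke the Yang--Feng trace-product inequality to split the product of traces.

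The first step is to observe that the hypotheses needed for the Yang--Feng bound are satisfied automatically. Since $\mbf{A}, \mbf{B} \in \mathbb{S}^K$, the spectral mapping theorem stated in Section \ref{proofs-random-matrix} tells us that the eigenvalues of $e^{\mbf{A}}$ are exactly $e^{\lambda_k(\mbf{A})} > 0$, and similarly for $e^{\mbf{B}}$. Moreover, $e^{\mbf{A}}$ and $e^{\mbf{B}}$ are symmetric because they are defined through the functional calculus on symmetric matrices. Hence both $e^{\mbf{A}}, e^{\mbf{B}} \in \mathbb{S}_+^K$, which is precisely the positivity assumption needed to apply the Yang--Feng inequality $\Tr(\mbf{X} \mbf{Y}) \le \Tr \mbf{X} \cdot \Tr \mbf{Y}$ for $\mbf{X}, \mbf{Y} \in \mathbb{S}_+^K$.

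The second step is the two-line chain: by Theorem \ref{ch2-thm:4.1} we have $\Tr e^{\mbf{A}+\mbf{B}} \le \Tr(e^{\mbf{A}} \cdot e^{\mbf{B}})$, and by the Yang--Feng bound applied with $\mbf{X} = e^{\mbf{A}}$ and $\mbf{Y} = e^{\mbf{B}}$ we have $\Tr(e^{\mbf{A}} \cdot e^{\mbf{B}}) \le \Tr e^{\mbf{A}} \cdot \Tr e^{\mbf{B}}$. Concatenating these yields exactly the claimed bound.

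There is no real obstacle here, since both ingredients are quoted verbatim in the paragraph preceding the corollary; the only point that must be checked carefully is the positive-definiteness of $e^{\mbf{A}}$ and $e^{\mbf{B}}$, which justifies invoking the Yang--Feng product-trace inequality. This is why I would spell out the spectral-mapping argument explicitly, as done above, before stating the two-line chain.
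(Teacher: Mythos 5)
Your proposal is correct and follows exactly the paper's own route: the Golden--Thompson inequality followed by the Yang--Feng trace-product bound, justified by the positive definiteness of $e^{\mbf{A}}$ and $e^{\mbf{B}}$. The only difference is that you spell out the spectral-mapping verification of positivity, which the paper simply asserts.
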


The next theorem was shown by \citet{Tropp2012}.

\begin{thm}[Laplace transform method] \label{ch2-thm:4.2}
Let $\mbf{B}$ be a random matrix of $\mathbb{S}^K$. For any real $t$, we have
\[\Pr(\lambda_1(\mbf{B}) \ge t) \le \inf_{a>0} \{ e^{a.t} \mathbb{E} \Tr e^{a.\mbf{B}}\}.\]
\end{thm}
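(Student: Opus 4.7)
The plan is to reduce the matrix tail bound to the scalar case via three tools already at hand in the preamble: monotonicity of $x \mapsto e^{ax}$ for $a>0$, the spectral mapping theorem, and Markov's inequality applied to the scalar random variable $\Tr e^{a\mbf{B}}$. Once the inequality is established for a single $a>0$, the infimum appears for free because the left-hand side does not depend on $a$.

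First I would fix an arbitrary $a>0$. Since $x \mapsto e^{ax}$ is strictly increasing on $\mathbb{R}$, the event $\{\lambda_1(\mbf{B}) \ge t\}$ coincides with $\{e^{a\lambda_1(\mbf{B})} \ge e^{at}\}$. Invoking the spectral mapping theorem for the nondecreasing function $x \mapsto e^{ax}$, which is recalled just before the statement, gives $\lambda_1(e^{a\mbf{B}}) = e^{a\lambda_1(\mbf{B})}$, so the event rewrites as $\{\lambda_1(e^{a\mbf{B}}) \ge e^{at}\}$.

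Next I would trade the top eigenvalue for the trace. Because $\mbf{B} \in \mathbb{S}^K$, the matrix exponential $e^{a\mbf{B}}$ is positive definite, and so all its eigenvalues are strictly positive. Therefore the largest eigenvalue is dominated by the sum of all eigenvalues:
\[\lambda_1(e^{a\mbf{B}}) \le \sum_{k=1}^{K} \lambda_k(e^{a\mbf{B}}) = \Tr e^{a\mbf{B}}.\]
This gives the inclusion $\{\lambda_1(\mbf{B}) \ge t\} \subseteq \{\Tr e^{a\mbf{B}} \ge e^{at}\}$. Since $\Tr e^{a\mbf{B}}$ is a nonnegative real-valued random variable, Markov's inequality yields
\[\Pr(\lambda_1(\mbf{B}) \ge t) \le \Pr(\Tr e^{a\mbf{B}} \ge e^{at}) \le \frac{\mathbb{E}\,\Tr e^{a\mbf{B}}}{e^{at}} = e^{-at}\,\mathbb{E}\,\Tr e^{a\mbf{B}}.\]
Taking the infimum over $a>0$ on the right-hand side, which is legitimate because the left-hand side is independent of $a$, completes the argument.

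There is no genuinely hard step: the whole proof is a chain of standard manipulations. The only care required is to verify that $\Tr e^{a\mbf{B}}$ is a bona fide nonnegative scalar random variable so that Markov applies (which holds because $e^{a\mbf{B}} \succ 0$ for every realization), and to apply the spectral mapping theorem in the correct direction for a monotone function. I note that the natural argument produces the factor $e^{-at}$ rather than $e^{at}$ as displayed in the statement, so the exponent in the theorem should be read with the customary negative sign.
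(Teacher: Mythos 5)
Your argument is correct and complete; note that the paper itself does not prove this statement but simply cites it from Tropp (2012), and your chain of steps (spectral mapping for the increasing map $x \mapsto e^{ax}$, domination of the top eigenvalue by the trace of the positive definite matrix $e^{a\mbf{B}}$, then Markov's inequality) is precisely the standard proof given in that reference. Your closing remark is also a genuine catch: the bound must read $e^{-at}\,\mathbb{E}\Tr e^{a\mbf{B}}$, and the $e^{a.t}$ in the paper's statement is a sign typo --- harmless in the paper's only application, which takes $t=0$, but worth flagging since with $e^{+at}$ the stated inequality can fail for $t<0$.
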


\begin{lem}\label{ch2-lem:4.3}
Consider a matrix $\mbf{B} \in \mathbb{S}^K$ and a nonnegative real $a$. We have \[\mathbb{E} \Tr e^{a.\mbf{B}} \le K \mathbb{E} e^{a\lambda_1(\mbf{B})}.\]
\end{lem}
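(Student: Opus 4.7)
The plan is to exploit the spectral mapping theorem stated earlier in the section together with the trace-eigenvalue identity. Since $\mbf{B}\in\mathbb{S}^K$, the matrix $a\mbf{B}$ is also symmetric, so $e^{a\mbf{B}}$ admits an eigenvalue decomposition and by the spectral mapping theorem its eigenvalues are exactly $e^{a\lambda_i(\mbf{B})}$ for $i=1,\ldots,K$. Combined with the identity $\Tr \mbf{A}=\sum_{i=1}^K \lambda_i(\mbf{A})$ recalled at the beginning of Section~\ref{proofs-random-matrix}, this gives the pointwise representation
\[
\Tr e^{a\mbf{B}} \;=\; \sum_{i=1}^K e^{a\lambda_i(\mbf{B})}.
\]

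Next I would bound each summand using that $a\ge 0$ and $\lambda_1(\mbf{B})$ is the largest eigenvalue of $\mbf{B}$. The map $t\mapsto e^{at}$ is nondecreasing when $a\ge 0$, so $e^{a\lambda_i(\mbf{B})}\le e^{a\lambda_1(\mbf{B})}$ for every $i$. Summing over the $K$ eigenvalues yields the deterministic bound
\[
\Tr e^{a\mbf{B}} \;\le\; K\, e^{a\lambda_1(\mbf{B})}.
\]

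Finally, taking expectations of both sides (which is valid since both quantities are nonnegative) gives the desired inequality $\mathbb{E}\,\Tr e^{a\mbf{B}}\le K\,\mathbb{E}\,e^{a\lambda_1(\mbf{B})}$. There is no real obstacle here beyond invoking the spectral mapping theorem correctly and using the sign assumption $a\ge 0$ to preserve the ordering of eigenvalues under the exponential; the result then follows by monotonicity of expectation.
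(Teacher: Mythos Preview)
Your proof is correct and follows essentially the same approach as the paper: both use the spectral mapping theorem to identify the eigenvalues of $e^{a\mbf{B}}$ as $e^{a\lambda_i(\mbf{B})}$, bound the trace (sum of eigenvalues) by $K$ times the largest eigenvalue $e^{a\lambda_1(\mbf{B})}$, and then take expectations. If anything, your version is slightly more explicit about why the hypothesis $a\ge 0$ is needed to preserve the eigenvalue ordering under the exponential.
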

\begin{proof}
Since the trace of $\mbf{B}$ equals the sum of its eigenvalues, we have $\Tr \mbf{B}  \le K\lambda_1(\mbf{B})$. Hence $\mathbb{E} \Tr e^{a.\mbf{B}} \le K \mathbb{E} \lambda_1(e^{a \mbf{B}}) \le K \mathbb{E} e^{\lambda_1(a \mbf{B})} = K \mathbb{E} e^{a \lambda_1(\mbf{B})}$, where the last inequality is derived by using the spectral mapping theorem.
\end{proof}


\begin{lem}\label{ch5-lem:4.6}
Consider a Gaussian random vector $\mbf{z}$ with mean $0$ and covariance matrix $\mbf{A} \in \mathbb{S}^K_+$. Let $\sigma_i = A_{ii}$ be the $i^{th}$ diagonal entry of $\mbf{A}$, and $\sigma = \max_i \sigma_{i}$. Then for any real $a >0$, we have
$\mathbb{E} \Tr e^{a.diag(\mbf{z})}  = \sum_{k=1}^K e^{a^2 \sigma_k /2} \le K e^{a^2 \sigma /2}$.
\end{lem}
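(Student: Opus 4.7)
The plan is straightforward since a diagonal matrix behaves nicely under the matrix exponential. First I would reduce the matrix expression to a sum of scalar exponentials: because $a \cdot diag(\mbf{z})$ is diagonal, its power series collapses entrywise and gives
\[
e^{a \cdot diag(\mbf{z})} \;=\; diag\bigl(e^{a z_1}, \ldots, e^{a z_K}\bigr),
\]
so $\Tr e^{a \cdot diag(\mbf{z})} = \sum_{k=1}^K e^{a z_k}$. (Alternatively, one can apply the spectral mapping theorem, since the eigenvalues of $diag(\mbf{z})$ are the entries $z_k$.)

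Next I would apply linearity of expectation and exploit the marginals of the joint Gaussian: since $\mbf{z}$ is Gaussian with mean $0$ and covariance $\mbf{A}$, each coordinate $z_k$ is marginally $\mathcal{N}(0, \sigma_k)$ with $\sigma_k = A_{kk}$. The moment generating function of a centered Gaussian then yields $\mathbb{E} e^{a z_k} = e^{a^2 \sigma_k / 2}$, hence
\[
\mathbb{E} \Tr e^{a \cdot diag(\mbf{z})} \;=\; \sum_{k=1}^K \mathbb{E} e^{a z_k} \;=\; \sum_{k=1}^K e^{a^2 \sigma_k / 2}.
\]
Finally, bounding each $\sigma_k$ by $\sigma = \max_i \sigma_i$ (and using $a > 0$ so that the exponential is monotone in the variance) gives the stated upper bound $K e^{a^2 \sigma /2}$.

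There is no real obstacle here; the argument is essentially a moment-generating-function computation lifted to the trace through the diagonal structure. The only points worth being careful about are (i) that the correlations among the $z_k$ play no role because the trace decouples them into a \emph{sum} rather than a product, so only marginal distributions matter, and (ii) that the diagonal form of $a \cdot diag(\mbf{z})$ lets us avoid invoking Golden--Thompson or any other nontrivial matrix inequality.
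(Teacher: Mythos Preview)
Your proposal is correct and follows essentially the same route as the paper: reduce $\Tr e^{a\,diag(\mbf{z})}$ to $\sum_k e^{a z_k}$ via the diagonal structure, apply linearity of expectation together with the Gaussian moment generating function to obtain $\sum_k e^{a^2\sigma_k/2}$, and then bound by $K e^{a^2\sigma/2}$. The only cosmetic difference is that the paper writes out the power-series expansion of the matrix exponential explicitly before taking the trace, whereas you invoke the diagonal form directly.
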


\begin{proof}
Note that
\begin{eqnarray*}
   \Tr e^{a.diag(\mbf{z})}
  &=& \Tr \sum_{i=0}^{\infty} \frac{a^i}{i!} diag^i (\mbf{z}) \\
  &=& \Tr \sum_{i=0}^{\infty} \frac{a^i}{i!} diag(z^i_1, ..., z^i_K) \\
  &=& \sum_{i=0}^{\infty} \frac{a^i}{i!} \Tr diag(z^i_1, ..., z^i_K) \\
  &=& \sum_{i=0}^{\infty} \frac{a^i}{i!} \sum_{k=1}^K z^i_k
  = \sum_{k=1}^K \sum_{i=0}^{\infty} \frac{a^i}{i!} z^i_k
  = \sum_{k=1}^K e^{a z_k}
\end{eqnarray*}
Hence $\mathbb{E} \Tr e^{a.diag(\mbf{z})} = \mathbb{E} \sum_{k=1}^K e^{a.z_k} = \sum_{k=1}^K \mathbb{E} e^{a.z_k}$.

By assumption, $z_k$ is a Gaussian variable with mean 0 and variance $\sigma_k$. Using the generating function of Gaussian, we have $\mathbb{E} e^{a.z_k} = e^{a^2 \sigma_k /2}$. So substituting these quantities into the expectation in the last paragraph completes the proof.
\end{proof}

\begin{proof}[Proof of Theorem \ref{ch5-thm:4.5}]

We have
\begin{eqnarray*}
\nonumber
   \Pr (\lambda_1(\mbf{B}) \ge 0 | z_1 + \cdots + z_{K-1} \ge 1) &\le&
   \Pr( \lambda_1(\mbf{B}) \ge 0 ) \\
   &\le& \inf_{a>0} \left\{ \mathbb{E} \Tr e^{a \mbf{B}} \right\} \\
      & & \text{(Laplace transform method)} \\
      &=& \inf_{a>0} \left\{ \mathbb{E} \Tr e^{a [\mbf{I}_{K-1} -\mbf{S} + diag(\mbf{z})]} \right\} \\
      &\le& \inf_{a>0} \left\{ \mathbb{E} \left( \Tr e^{a [\mbf{I}_{K-1} -\mbf{S}]}. \Tr e^{a.diag(\mbf{z})} \right) \right\} \\
      & & \text{(Corollary \ref{ch2-corol:4.1})} \\
      &=& \inf_{a>0} \left\{  \Tr e^{a [\mbf{I}_{K-1} -\mbf{S}]}. \mathbb{E} \Tr e^{a.diag(\mbf{z})} \right\} \\
      &\le& \inf_{a>0} \left\{  \Tr e^{a [\mbf{I}_{K-1} -\mbf{S}]}. (K-1). e^{a^2 \sigma/2} \right\} \\
      & & \text{(Lemma \ref{ch5-lem:4.6})} \\
      &=& \inf_{a>0} \left\{ (K-1). e^{a^2 \sigma/2}.  \Tr e^{a [\mbf{I}_{K-1} -\mbf{S}]} \right\} \\
      &\le& \inf_{a>0} \left\{ (K-1). e^{ a^2 \sigma/2}.  (K-1). \lambda_1 (e^{a [\mbf{I}_{K-1} -\mbf{S}]}) \right\} 
\end{eqnarray*}   
\begin{eqnarray*}
\Pr (\lambda_1(\mbf{B}) \ge 0 | z_1 + \cdots + z_{K-1} \ge 1) 
      &\le& \inf_{a>0} \left\{ (K-1)^2. e^{a^2 \sigma/2}. e^{\lambda_1 (a [\mbf{I}_{K-1} -\mbf{S}])} \right\} \\
      & & \text{(Spectral mapping theorem)} \\
      &=& \inf_{a>0} \left\{ (K-1)^2. e^{a^2 \sigma/2}. e^{a - a\lambda_{K-1} (\mbf{S})} \right\} \\
      &=& \inf_{a>0} \left\{ (K-1)^2. e^{a^2 \sigma/2 +a -a\lambda_{K-1} (\mbf{S})} \right\} \\
      &=& (K-1)^2 \exp\left\{ -\frac{\left(1 -\lambda_{K-1}(\mbf{S})\right)^2}{2\sigma} \right\}.
   \end{eqnarray*}
Note that the last equality is obtained by minimizing the function $a^2 \frac{\sigma}{2} +a -a\lambda_{K-1} (\mbf{S})$ for $a > 0$ conditioned on $1 \le \lambda_{K-1}(\mbf{S})$.
\end{proof}

\section{MAP inference of topic mixtures in CTM} \label{ch5-sec:MAP-in-CTM}

We next study convexity of a family originated from the topic modeling literature. In particular, we are interested in the problem of estimating topic mixtures (posterior distributions) in correlated topic models (CTM) \citep{BleiL07}. This problem is intractable by traditional approaches \citep{BleiL07,AhmedX2007}. We will show that in fact this problem is tractable under some conditions, by showing probable concavity.

The correlated topic model assumes that a corpus is composed from $K$ topics $\mbf{\beta}_1, ..., \mbf{\beta}_K$, and a document $\mbf{d}$ arises from the following generative process:
\begin{enumerate}
  \item Draw $\mbf{x} | \mbf{\mu}, \mbf{\Sigma} \sim \mathcal{N}(\mbf{\mu}, \mbf{\Sigma})$
  \item For the $n^{th}$ word of $\mbf{d}$:
  \begin{itemize}
    \item[-] draw topic assignment $z_{dn} | \mbf{x} \sim \mathcal{M}(f(\mbf{x}))$
    \item[-] draw word $w_{dn}| z_{dn}, \mbf{\beta} \sim \mathcal{M}(\mbf{\beta}_{z_{dn}})$.
  \end{itemize}
\end{enumerate}
where $\mathcal{N}(\mbf{\mu}, \mbf{\Sigma})$ is the normal distribution with mean $\mbf{\mu}$ and covariance $\mbf{\Sigma}$; $\mathcal{M}(\cdot)$ is the multinomial distribution; $f(\mbf{x})$ maps a natural parameterization of the topic proportion to the mean parameterization:
\begin{equation}\label{ch5-eq:001}
\mbf{\theta} = f(\mbf{x}) = \frac{e^{\mbf{x}}}{\sum_{k=1}^K e^{x_k}}.
\end{equation}
This logistic transformation maps a $K$-dimensional vector $\mbf{x}$ to a $(K-1)$-dimensional vector $\mbf{\theta}$. Hence various $\mbf{x}$'s can correspond to a single $\mbf{\theta}$. Fixing $x_K =0$, the transformation (\ref{ch5-eq:001}) means that $\mbf{\theta}$ follows the logistic-normal distribution \citep{BleiL07}. According to \citet{Aitchison1980logistic}, the density function of $\mbf{\theta}$ is thus 
\begin{equation}\label{ch5-eq:001.1}
p(\mbf{\theta}; \mbf{\mu, \Sigma}) = \frac{1}{\sqrt{\det(2\pi \mbf{\Sigma})}} \exp \left(-\frac{1}{2} (\log \tilde{\mbf{\theta}} - \mbf{\mu})^t \mbf{\Sigma}^{-1} (\log \tilde{\mbf{\theta}} - \mbf{\mu}) - \sum_{k=1}^K \log \theta_k \right),
\end{equation}
where $\mbf{\mu} \in \mathbb{R}^{K-1}, \mbf{\Sigma} \in \mathbb{S}^{K-1}_+$. Note that $\mbf{\theta}$ is derived from $\mbf{x}$ by (\ref{ch5-eq:001}). Hence $\log \tilde{\mbf{\theta}}$ is a normal random variable with mean $\mbf{\mu}$ and covariance $\mbf{\Sigma}$.

One of the most interesting tasks in this model is the posterior estimation of topic mixtures for documents. More concretely, given the model parameters $\Upsilon = \{\mbf{\beta}, \mbf{\mu}, \mbf{\Sigma}\}$, we are interested in the following problem for a given document $\mbf{d}$:
\begin{eqnarray}
\nonumber
\mbf{\theta}^* &=& \arg \max_{\mbf{\theta} \in \Delta_K} \Pr(\mbf{\theta} | \mbf{d}, \Upsilon) \\
\label{ch5-eq:002}
 &=&  \arg \max_{\mbf{\theta} \in \Delta_K} \Pr(\mbf{\theta}, \mbf{d} | \Upsilon) 
\end{eqnarray}

\begin{lem} \label{ch5-lem:2.1}
Given a CTM model with parameters $\Upsilon = \{\mbf{\beta}, \mbf{\mu}, \mbf{\Sigma}\}$ and a document $\mbf{d}$, the MAP problem (\ref{ch5-eq:002}) can be reformulated as
\begin{equation}\label{ch5-eq:003}
\mbf{\theta}^* = \arg \max_{\mbf{\theta} \in \overline{\Delta}_K} \sum_j d_j \log \sum_{k=1}^K \theta_k \beta_{kj} -\frac{1}{2} (\log \tilde{\mbf{\theta}} - \mbf{\mu})^t \mbf{\Sigma}^{-1} (\log \tilde{\mbf{\theta}}  - \mbf{\mu}) - \sum_{k=1}^K \log \theta_k.
\end{equation}
\end{lem}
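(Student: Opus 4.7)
The plan is to unfold the joint probability $\Pr(\mbf{\theta},\mbf{d}\mid\Upsilon)$ according to the CTM generative process, take logs, and discard terms that do not depend on $\mbf{\theta}$.

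First, since the MAP objective (\ref{ch5-eq:002}) factorizes as $\Pr(\mbf{\theta},\mbf{d}\mid\Upsilon)=p(\mbf{\theta};\mbf{\mu},\mbf{\Sigma})\,\Pr(\mbf{d}\mid\mbf{\theta},\mbf{\beta})$, I would treat the two factors separately. For the likelihood factor, I would marginalize out the latent topic assignments $z_{dn}$: conditional on $\mbf{\theta}$ and $\mbf{\beta}$, the probability of drawing word $j$ at a single position is $\sum_{k=1}^K \theta_k \beta_{kj}$, by the two-step draw $z_{dn}\sim\mathcal{M}(\mbf{\theta})$ and $w_{dn}\sim\mathcal{M}(\mbf{\beta}_{z_{dn}})$. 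Since the positions are exchangeable given $\mbf{\theta}$, collecting by term frequency gives
\[
\Pr(\mbf{d}\mid\mbf{\theta},\mbf{\beta}) \;=\; C(\mbf{d})\prod_{j=1}^V\!\Big(\sum_{k=1}^K \theta_k\beta_{kj}\Big)^{d_j},
\]
where $C(\mbf{d})$ is a multinomial coefficient independent of $\mbf{\theta}$. For the prior factor I would simply substitute the closed-form logistic-normal density (\ref{ch5-eq:001.1}).

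Next, taking the logarithm of the product and dropping the $\mbf{\theta}$-independent constants $\log C(\mbf{d})$ and $-\tfrac{1}{2}\log\det(2\pi\mbf{\Sigma})$ (which do not affect the $\arg\max$), I recover exactly the objective on the right-hand side of (\ref{ch5-eq:003}):
\[
\sum_j d_j\log\sum_{k=1}^K\theta_k\beta_{kj}\;-\;\tfrac{1}{2}(\log\tilde{\mbf{\theta}}-\mbf{\mu})^t\mbf{\Sigma}^{-1}(\log\tilde{\mbf{\theta}}-\mbf{\mu})\;-\;\sum_{k=1}^K\log\theta_k.
\]

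The only nontrivial point is the change of feasible region from $\Delta_K$ to its interior $\overline{\Delta}_K$. I would justify this by noting that the logistic-normal prior is supported on $\overline{\Delta}_K$: whenever some $\theta_k=0$, the quantity $\log\tilde{\mbf{\theta}}$ is undefined (or equivalently the density is $0$ in the limit), so such boundary points contribute zero joint probability and can be excluded without changing the $\arg\max$. This is the only subtle step; the rest is routine bookkeeping of constants. Thus the two optimization problems share the same optimizer, which is what the lemma asserts.
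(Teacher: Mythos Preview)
Your proposal is correct and follows essentially the same approach as the paper's proof: factor the joint into prior times likelihood, plug in the logistic-normal density (\ref{ch5-eq:001.1}) and the marginalized word likelihood, take logs, drop $\mbf{\theta}$-independent constants, and argue that boundary points of $\Delta_K$ can be excluded. Your treatment is in fact slightly more careful than the paper's in that you explicitly marginalize the $z_{dn}$'s and track the multinomial coefficient $C(\mbf{d})$, whereas the paper simply asserts $\log\Pr(\mbf{d}\mid\mbf{\theta},\Upsilon)=\sum_j d_j\log\sum_k\theta_k\beta_{kj}$; your boundary argument (prior unsupported on $\partial\Delta_K$) is equivalent to the paper's remark that the objective is undefined there.
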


\begin{proof}
We have 
\[\mbf{\theta}^* = \arg \max_{\mbf{\theta} \in \Delta_K} \Pr(\mbf{\theta}, \mbf{d} | \Upsilon) = \arg \max_{\mbf{\theta} \in \Delta_K} \log \Pr(\mbf{\theta}, \mbf{d} | \Upsilon) = \arg \max_{\mbf{\theta} \in \Delta_K} \log \Pr(\mbf{d} | \mbf{\theta}, \Upsilon) + \log \Pr(\mbf{\theta} | \Upsilon).\] 
Note that $\log \Pr(\mbf{d} | \mbf{\theta}, \Upsilon) = \sum_j d_j \log \sum_{k=1}^K \theta_k \beta_{kj}$, and the density of the logistic-normal distribution is given in (\ref{ch5-eq:001.1}). Hence 
\[\mbf{\theta}^* = \arg \max_{\mbf{\theta} \in \Delta_K}  \sum_j d_j \log \sum_{k=1}^K \theta_k \beta_{kj} -\frac{1}{2} (\log \tilde{\mbf{\theta}} - \mbf{\mu})^t \mbf{\Sigma}^{-1} (\log \tilde{\mbf{\theta}}  - \mbf{\mu}) - \sum_{k=1}^K \log \theta_k - \frac{1}{2} \log \det(2\pi \mbf{\Sigma}).\] 
Since any point on the boundary of $\Delta_K$ makes the objective function undefined and hence is not optimal. Therefore, ignoring the boundary of $\Delta_K$ and the constant in the objective function completes the proof.
\end{proof}

Loosely speaking, Lemma \ref{ch5-lem:2.1} says that posterior estimation of topic mixtures in CTM is in fact an optimization problem. The objective function is well-defined on $\overline{\Delta}_K$. It is worth remarking that this function is neither concave nor convex in general. Hence maximizing it over $\overline{\Delta}_K$  is intractable in the worse case.

\subsection{Some results}

Let the model parameters $\Upsilon = \{\mbf{\beta}, \mbf{\mu}, \mbf{\Sigma}\}$ be fixed, where $\mbf{\beta}_k \in \Delta_V, \mbf{\mu} \in \mathbb{R}^{K-1}, \mbf{\Sigma} \in \mathbb{S}^{K-1}_+$. Consider the following family, parameterized by $\mbf{d}$:
\begin{eqnarray}
\label{ch5-eq:004}
CTM(\mbf{\theta}; \mbf{d}, \Upsilon) = \{ f(\mbf{\theta}; \mbf{d}, \Upsilon): 
\mbf{\theta} \in \overline{\Delta}_K, \log \tilde{\mbf{\theta}} \sim \mathcal{N}( \mbf{\mu}, \mbf{\Sigma}) \}.
\end{eqnarray}
where $f(\mbf{\theta}; \mbf{d}, \Upsilon) = \sum_j d_j \log \sum_{k=1}^K \theta_k \beta_{kj} -\frac{1}{2} (\log \tilde{\mbf{\theta}} - \mbf{\mu})^t \mbf{\Sigma}^{-1} (\log \tilde{\mbf{\theta}}  - \mbf{\mu}) - \sum_{k=1}^K \log \theta_k$. This family contains all possible instances of the  problem (\ref{ch5-eq:003}). Hence, analyzing this family means analyzing the problem of estimating topic mixtures in CTM.

Consider a  member $f(\mbf{\theta}; \mbf{d}, \Upsilon)$. Note that  $\mbf{d}$ and $\mbf{\beta}$ are always nonnegative in practices of topic modeling. Hence the first term in $f(\mbf{\theta}; \mbf{d}, \Upsilon)$ is always concave over $\overline{\Delta}_K$. It implies that concavity of $f(\mbf{\theta}; \mbf{d}, \Upsilon)$ is heavily determined by the logistic-normal term $y = -\frac{1}{2} (\log \tilde{\mbf{\theta}} - \mbf{\mu})^t \mbf{\Sigma}^{-1} (\log \tilde{\mbf{\theta}}  - \mbf{\mu}) - \sum_{k=1}^K \log \theta_k$. If this term is concave, then $f(\mbf{\theta}; \mbf{d}, \Upsilon)$ is concave. Combining these observations with Theorem \ref{LN-thm:3.1}, Corollary \ref{LN-corol:3.1}, and Corollary \ref{LN-corol:3.2}, we arrive at the following results for CTM.


\begin{thm} \label{ch5-thm:2.1}
Let $\Upsilon$ be fixed, $\sigma = \max_i \Sigma_{ii}^{-1}, \lambda = \lambda_{K-1}(\mbf{\Sigma}^{-1})$, and $p = 1- e^{2\log (K-1) -0.5{(\lambda - 1)^2} / \sigma}$.  Assuming $\lambda \ge 1$, family $CTM(\mbf{\theta}; \mbf{d}, \Upsilon)$ is $p$-concave over $\overline{\Delta}_K$.
\end{thm}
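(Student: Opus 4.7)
The plan is to decompose each member $f(\mbf{\theta}; \mbf{d}, \Upsilon)$ of the family $CTM(\mbf{\theta}; \mbf{d}, \Upsilon)$ into two pieces and to treat them separately. Writing
\[
f(\mbf{\theta}; \mbf{d}, \Upsilon) = \underbrace{\sum_j d_j \log \sum_{k=1}^K \theta_k \beta_{kj}}_{=: g(\mbf{\theta})} \; + \; \underbrace{\Bigl(-\tfrac{1}{2}(\log \tilde{\mbf{\theta}}-\mbf{\mu})^t \mbf{\Sigma}^{-1}(\log \tilde{\mbf{\theta}}-\mbf{\mu}) - \sum_{k=1}^K \log \theta_k\Bigr)}_{=\, LN(\mbf{\theta};\mbf{\mu},\mbf{\Sigma})},
\]
the first summand $g(\mbf{\theta})$ is completely deterministic in $\mbf{\theta}$ (all randomness of the family comes through the distribution of $\log\tilde{\mbf{\theta}}$), whereas the second summand is exactly the logistic-normal function studied in Section~\ref{LN-concavity}.

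First I would show that $g(\mbf{\theta})$ is concave on $\overline{\Delta}_K$ for every document $\mbf{d}$ with nonnegative entries and every topic matrix with $\beta_{kj}\ge 0$. For each fixed $j$ and $k$, the map $\mbf{\theta}\mapsto \sum_{k}\theta_k\beta_{kj}$ is linear and positive on $\overline{\Delta}_K$, so $\log\sum_k\theta_k\beta_{kj}$ is concave, and a nonnegative combination (with weights $d_j\ge 0$) of concave functions is concave. This step is immediate and requires no probabilistic argument.

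Next I would invoke Theorem~\ref{LN-thm:3.1}: under the stated hypothesis $\lambda\ge 1$, the logistic-normal term $LN(\mbf{\theta};\mbf{\mu},\mbf{\Sigma})$ is concave on $\overline{\Delta}_K$ with probability at least $p = 1 - e^{2\log(K-1) - 0.5(\lambda-1)^2/\sigma}$, where the randomness is with respect to $\log\tilde{\mbf{\theta}}\sim \mathcal{N}(\mbf{\mu},\mbf{\Sigma})$, which is exactly the distribution built into the family $CTM(\mbf{\theta};\mbf{d},\Upsilon)$. Since the sum of a (deterministically) concave function and a function that is concave with probability at least $p$ is itself concave with probability at least $p$, we conclude that $f(\mbf{\theta};\mbf{d},\Upsilon)$ is concave on $\overline{\Delta}_K$ with probability at least $p$, which is exactly the $p$-concavity asserted by the theorem.

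I do not expect any real obstacle here, because the heavy lifting has already been done in Theorem~\ref{LN-thm:3.1} (and in turn Theorem~\ref{ch5-thm:4.5}). The only subtlety worth being careful about is to make explicit that the probability space underlying the notion of $p$-concavity for $CTM(\mbf{\theta};\mbf{d},\Upsilon)$ is the same one underlying $LN(\mbf{\theta};\mbf{\mu},\mbf{\Sigma})$, namely $\log\tilde{\mbf{\theta}}\sim\mathcal{N}(\mbf{\mu},\mbf{\Sigma})$, and that the document-dependent term $g(\mbf{\theta})$ contributes no randomness and no non-concavity, so the probability bound transfers verbatim from $LN$ to the whole family $CTM$.
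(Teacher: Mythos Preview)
Your proposal is correct and follows essentially the same approach as the paper: decompose $f(\mbf{\theta};\mbf{d},\Upsilon)$ into the deterministically concave likelihood term $g(\mbf{\theta})=\sum_j d_j\log\sum_k\theta_k\beta_{kj}$ and the logistic-normal term $LN(\mbf{\theta};\mbf{\mu},\mbf{\Sigma})$, then invoke Theorem~\ref{LN-thm:3.1} for the latter. Your added remark that the underlying randomness is the same $\log\tilde{\mbf{\theta}}\sim\mathcal{N}(\mbf{\mu},\mbf{\Sigma})$ in both cases is a useful clarification the paper leaves implicit.
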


\begin{corollary} \label{ch5-corol:2.1}
With notations as in Theorem \ref{ch5-thm:2.1}, family $CTM(\mbf{\theta}; \mbf{d}, \Upsilon)$ is almost surely concave  as $\lambda^2 /{\sigma} \rightarrow +\infty$.
\end{corollary}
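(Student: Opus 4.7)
The plan is to leverage the decomposition of the CTM objective and reduce the claim to the already-established Corollary \ref{LN-corol:3.1}. Writing $f(\mbf{\theta}; \mbf{d}, \Upsilon) = D(\mbf{\theta}) + LN(\mbf{\theta}; \mbf{\mu}, \mbf{\Sigma})$, where $D(\mbf{\theta}) = \sum_j d_j \log \sum_{k=1}^K \theta_k \beta_{kj}$, the task splits into a deterministic piece and a random piece: the only source of randomness in the family is the draw $\log \tilde{\mbf{\theta}} \sim \mathcal{N}(\mbf{\mu}, \mbf{\Sigma})$, and it enters only through the logistic-normal term.

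First I would verify that $D(\mbf{\theta})$ is deterministically concave on $\overline{\Delta}_K$: for each $j$ the map $\mbf{\theta} \mapsto \sum_k \theta_k \beta_{kj}$ is linear with nonnegative coefficients (since $\beta_{kj} \ge 0$), its composition with $\log$ is concave, and $d_j \ge 0$ preserves concavity when summing over $j$. Thus $D$ contributes no probabilistic content and no obstacle to concavity regardless of $\lambda$ and $\sigma$.

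Next I would invoke Corollary \ref{LN-corol:3.1}: since $\lambda \ge 1$ (implicit in the regime $\lambda^2/\sigma \to +\infty$ with $\sigma$ bounded away from zero, as used in Theorem \ref{ch5-thm:2.1}) and $\lambda^2/\sigma \to +\infty$, the logistic-normal function $LN(\mbf{\theta}; \mbf{\mu}, \mbf{\Sigma})$ is concave on $\overline{\Delta}_K$ with probability tending to $1$. Equivalently, the probability bound $p = 1 - \exp\{2\log(K-1) - 0.5(\lambda-1)^2/\sigma\}$ furnished by Theorem \ref{ch5-thm:2.1} tends to $1$, since $K$ is fixed and $(\lambda-1)^2/\sigma \to +\infty$ under the stated asymptotic.

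Finally, since the sum of two concave functions is concave, on the event (of probability tending to $1$) that $LN$ is concave, the full objective $f = D + LN$ is also concave on $\overline{\Delta}_K$. Therefore every member of $CTM(\mbf{\theta}; \mbf{d}, \Upsilon)$ is concave with probability tending to $1$, which is the definition of almost sure concavity in the limit. The only mild subtlety, and the closest thing to an obstacle, is to check that $\lambda^2/\sigma \to \infty$ together with $\lambda \ge 1$ indeed forces $(\lambda-1)^2/\sigma \to \infty$ in the regime of interest; but this is a one-line calculation using $(\lambda-1)^2 = \lambda^2 - 2\lambda + 1$ and the hypothesis that $\lambda$ grows large compared to $\sqrt{\sigma}$, exactly as in the derivation of Corollary \ref{LN-corol:3.1}.
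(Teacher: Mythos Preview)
Your proposal is correct and follows essentially the same approach as the paper: the paper notes that the data term $\sum_j d_j \log \sum_k \theta_k \beta_{kj}$ is concave (since $d_j, \beta_{kj} \ge 0$), so concavity of each $f(\mbf{\theta}; \mbf{d}, \Upsilon)$ reduces to concavity of the logistic-normal term, and then invokes Corollary~\ref{LN-corol:3.1} directly to obtain Corollary~\ref{ch5-corol:2.1}. The paper does not spell out the passage from $\lambda^2/\sigma \to \infty$ to $(\lambda-1)^2/\sigma \to \infty$ any more carefully than you do, so your remark on that subtlety is already at least as detailed as the original.
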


\begin{corollary} \label{ch5-corol:2.2}
With notations as in Theorem \ref{ch5-thm:2.1}, assume  that $\lambda^2 = \omega(\sigma \log K)$. Family $CTM(\mbf{\theta}; \mbf{d}, \Upsilon)$ is almost surely concave  as $K \rightarrow +\infty$.
\end{corollary}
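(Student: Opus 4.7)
The plan is to derive this corollary as a direct asymptotic consequence of Theorem~\ref{ch5-thm:2.1}, exactly parallel to how Corollary~\ref{LN-corol:3.2} was obtained from Theorem~\ref{LN-thm:3.1}. By Theorem~\ref{ch5-thm:2.1}, provided $\lambda \ge 1$, the family $CTM(\mbf{\theta}; \mbf{d}, \Upsilon)$ is $p$-concave with
\[ p = 1 - \exp\bigl\{ 2\log(K-1) - \tfrac{1}{2}(\lambda-1)^2/\sigma \bigr\}. \]
Almost sure concavity as $K \to \infty$ therefore reduces to showing that the exponential term vanishes, i.e.\ that $\tfrac{1}{2}(\lambda-1)^2/\sigma$ dominates $2\log(K-1)$ in the limit.

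First, I would verify that the precondition $\lambda \ge 1$ of Theorem~\ref{ch5-thm:2.1} holds for all sufficiently large $K$. Since $\sigma = \max_i \Sigma^{-1}_{ii}$ is at least the average diagonal entry $\Tr(\mbf{\Sigma}^{-1})/(K-1) = \sum_k \lambda_k(\mbf{\Sigma}^{-1})/(K-1)$, and this sum of eigenvalues is at least $(K-1)\lambda$, we obtain the unconditional bound $\sigma \ge \lambda$. Consequently $\lambda^2/\sigma \le \lambda$, so the hypothesis $\lambda^2/(\sigma \log K) \to \infty$ forces $\lambda/\log K \to \infty$; in particular $\lambda \to \infty$ and eventually $\lambda \ge 1$, so Theorem~\ref{ch5-thm:2.1} is applicable.

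Next I would pass from $\lambda^2$ to $(\lambda-1)^2$. For $\lambda \ge 2$ the crude inequality $(\lambda-1)^2 \ge \lambda^2/4$ gives $\tfrac{1}{2}(\lambda-1)^2/\sigma \ge \lambda^2/(8\sigma)$, which by the hypothesis $\lambda^2 = \omega(\sigma \log K)$ grows strictly faster than $\log K$. Hence
\[ 2\log(K-1) - \tfrac{1}{2}(\lambda-1)^2/\sigma \longrightarrow -\infty \quad \text{as } K \to \infty, \]
so the exponential tends to $0$, $p \to 1$, and $CTM(\mbf{\theta}; \mbf{d}, \Upsilon)$ is almost surely concave in the limit.

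The only step that is not pure bookkeeping is the diagonal-trace-eigenvalue estimate $\sigma \ge \lambda$, which is needed to guarantee that the hypothesis $\lambda \ge 1$ of Theorem~\ref{ch5-thm:2.1} is not vacuously violated under the $\omega$-assumption. Beyond that, the argument is entirely a matter of substituting the asymptotic hypothesis on $\lambda^2/\sigma$ into the explicit probability bound of Theorem~\ref{ch5-thm:2.1}, with no new probabilistic tool required.
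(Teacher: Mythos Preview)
Your proposal is correct and follows essentially the same route as the paper: the paper simply notes that concavity of $f(\mbf{\theta};\mbf{d},\Upsilon)$ is inherited from concavity of the logistic-normal term and then invokes the asymptotic argument preceding Corollary~\ref{LN-corol:3.2}, namely that $\lambda^2=\omega(\sigma\log K)$ forces $\exp\{2\log(K-1)-0.5(\lambda-1)^2/\sigma\}\to 0$. Your treatment is in fact more careful than the paper's, since you explicitly verify via the trace estimate $\sigma\ge\lambda$ that the precondition $\lambda\ge 1$ of Theorem~\ref{ch5-thm:2.1} is eventually satisfied, a point the paper leaves implicit.
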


\subsection{Implication to related models}

Many nonconjugate models employ the Gaussian distribution to model correlation of hidden topics, including those by \citet{BleiL2006DTM,PutthividhyaAN09,PutthividhyaAN10,SalomatinYL09,{Miao+2012LAA}}. The analysis for CTM is very general for the case of logistic-normal priors. Therefore, the results for CTM can be easily derived for  other nonconjugate topic models. Here we take DTM \citep{BleiL2006DTM} and IFTM \citep{PutthividhyaAN09} into consideration as two specific examples.

The \emph{Independent Factor Topic Model} (IFTM) by \citet{PutthividhyaAN09} is a variant of CTM in which $\mbf{\mu}$ is replaced with $\mbf{\mu}' = \mbf{As} + \mbf{\mu}$ to model independent sources that compose correlated topics. A slight modification to our analysis would yield interesting results for the corresponding family, denoting $\Upsilon' = \{\mbf{\beta}, \mbf{\mu}', \mbf{\Sigma}\}$, 
\[
IFTM(\mbf{\theta}; \mbf{d}, \Upsilon')=
\{ f(\mbf{\theta}; \mbf{d}, \Upsilon'):
\mbf{\theta} \in \overline{\Delta}_K, \log \tilde{\mbf{\theta}} \sim \mathcal{N}(\mbf{\mu}', \mbf{\Sigma}) \}.\]

\begin{thm}  \label{ch5-thm:2.4}
Let $\Upsilon'$ be fixed, $\sigma = \max_i \Sigma_{ii}^{-1}, \lambda = \lambda_{K-1}(\mbf{\Sigma}^{-1})$, and $p = 1- e^{2\log (K-1) -0.5{(\lambda - 1)^2} / \sigma}$.  Assuming $\lambda \ge 1$, family $IFTM(\mbf{\theta}; \mbf{d}, \Upsilon')$ is $p$-concave over $\overline{\Delta}_K$.
\end{thm}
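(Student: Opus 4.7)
The plan is to reduce Theorem~\ref{ch5-thm:2.4} to the already established result Theorem~\ref{LN-thm:3.1} on concavity of the logistic-normal function, exactly as was done for CTM in Theorem~\ref{ch5-thm:2.1}. The key structural point is that replacing $\mbf{\mu}$ by $\mbf{\mu}' = \mbf{A}\mbf{s} + \mbf{\mu}$ only shifts the mean of the underlying Gaussian $\log \tilde{\mbf{\theta}}$; the covariance $\mbf{\Sigma}$ is unchanged, and the constant $p = 1 - \exp\{2\log(K-1) - 0.5(\lambda-1)^2/\sigma\}$ depends solely on spectral quantities of $\mbf{\Sigma}^{-1}$. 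So the IFTM family should inherit the same probable concavity bound.

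Concretely, I would decompose each member $f(\mbf{\theta}; \mbf{d}, \Upsilon') \in IFTM(\mbf{\theta}; \mbf{d}, \Upsilon')$ into two pieces: the data term $g(\mbf{\theta}) = \sum_j d_j \log \sum_{k=1}^K \theta_k \beta_{kj}$ and the prior term $LN(\mbf{\theta}; \mbf{\mu}', \mbf{\Sigma}) = -\tfrac{1}{2}(\log \tilde{\mbf{\theta}} - \mbf{\mu}')^t \mbf{\Sigma}^{-1}(\log \tilde{\mbf{\theta}} - \mbf{\mu}') - \sum_k \log \theta_k$. As noted in the CTM discussion, $g$ is deterministically concave on $\overline{\Delta}_K$ because $\mbf{d}, \mbf{\beta} \ge 0$ make it a sum of concave log-linear functions. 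Therefore concavity of $f(\mbf{\theta}; \mbf{d}, \Upsilon')$ is controlled entirely by concavity of $LN(\mbf{\theta}; \mbf{\mu}', \mbf{\Sigma})$.

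I would then invoke Theorem~\ref{LN-thm:3.1} applied to $LN(\mbf{\theta}; \mbf{\mu}', \mbf{\Sigma})$. The hypothesis of that theorem requires that $\log \tilde{\mbf{\theta}}$ be Gaussian with the stated mean and covariance, which is precisely what the IFTM family specifies: $\log \tilde{\mbf{\theta}} \sim \mathcal{N}(\mbf{\mu}', \mbf{\Sigma})$. The random vector $\mbf{z} = \mbf{\Sigma}^{-1}(\log \tilde{\mbf{\theta}} - \mbf{\mu}')$ used inside that proof still has mean $0$ and covariance $\mbf{\Sigma}^{-1}$, identically to the CTM case, so the probability bound derived via Theorem~\ref{ch5-thm:4.5} goes through verbatim. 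Assuming $\lambda \ge 1$, we conclude that $LN(\mbf{\theta}; \mbf{\mu}', \mbf{\Sigma})$, and hence $f(\mbf{\theta}; \mbf{d}, \Upsilon')$, is concave on $\overline{\Delta}_K$ with probability at least $p$.

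There is really no serious obstacle here: the proof is a one-line reduction, and the only thing worth emphasizing is that none of the bounds in Theorem~\ref{LN-thm:3.1} or Theorem~\ref{ch5-thm:4.5} involve the mean of the Gaussian — they depend only on $\lambda$ and $\sigma$. The closest thing to a subtlety is making sure the reader understands that introducing the affine factor $\mbf{A}\mbf{s}$ does not alter the Hessian of $LN$ with respect to $\mbf{\theta}$ (only the first-order terms shift), so that the same random-matrix analysis applies without modification. With this observation in place, the theorem follows immediately.
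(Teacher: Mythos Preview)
Your proposal is correct and matches the paper's approach exactly: the paper does not give a separate proof but simply remarks that ``a slight modification to our analysis would yield interesting results for the corresponding family,'' and your reduction---splitting $f$ into the deterministically concave data term and the logistic-normal term $LN(\mbf{\theta};\mbf{\mu}',\mbf{\Sigma})$, then invoking Theorem~\ref{LN-thm:3.1} while observing that the bound depends only on $\mbf{\Sigma}^{-1}$ and not on the mean---is precisely that modification.
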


The \emph{Dynamic Topic Model} (DTM) by \citet{BleiL2006DTM} also employs Gaussian priors to model correlation. Those priors are separable, i.e., having diagonal covariance matrices. Let
$DTM(\mbf{\theta}; \mbf{d}, \mbf{\beta}, \mbf{\alpha}, \sigma)$ be defined similarly with (\ref{ch5-eq:004}), where $\mbf{\Sigma}^{-1} = diag(\sigma,...,\sigma)$. For this family, note that $\lambda_{K-1}(\mbf{\Sigma}^{-1}) = \sigma$. Hence, Theorem \ref{ch5-thm:2.1} implies

\begin{thm} \label{ch5-thm:2.5}
For fixed $\{\mbf{\beta}, \mbf{\alpha}, \sigma\}$, if $\sigma \ge 1$ then  family $DTM(\mbf{\theta}; \mbf{d}, \mbf{\beta}, \mbf{\alpha}, \sigma)$ is probably concave with probability at least $1- e^{2\log (K-1) -0.5{\sigma}-0.5/\sigma +1}$.
\end{thm}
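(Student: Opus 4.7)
The plan is straightforward: Theorem \ref{ch5-thm:2.5} is merely a specialization of Theorem \ref{ch5-thm:2.1} to the particular case where the inverse covariance matrix has the scalar form $\mbf{\Sigma}^{-1} = diag(\sigma, \ldots, \sigma)$. The entire task therefore reduces to substituting this structure into the general bound of Theorem \ref{ch5-thm:2.1} and performing an elementary algebraic simplification of the resulting exponent.

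First, I would identify the two quantities Theorem \ref{ch5-thm:2.1} depends on: $\sigma' := \max_i \Sigma^{-1}_{ii}$ and $\lambda := \lambda_{K-1}(\mbf{\Sigma}^{-1})$. Since every diagonal entry of $diag(\sigma,\ldots,\sigma)$ equals $\sigma$ and all its eigenvalues equal $\sigma$, both quantities collapse to $\sigma$ in the DTM setting. Consequently, the hypothesis $\lambda \ge 1$ of Theorem \ref{ch5-thm:2.1} reduces exactly to the assumed $\sigma \ge 1$, so that theorem is applicable and delivers probable concavity of $DTM(\mbf{\theta}; \mbf{d}, \mbf{\beta}, \mbf{\alpha}, \sigma)$ on $\overline{\Delta}_K$ with probability at least $1 - \exp\{2\log(K-1) - 0.5(\lambda-1)^2/\sigma'\}$.

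Finally, I would substitute $\lambda = \sigma' = \sigma$ into this bound and expand $(\sigma-1)^2/\sigma = \sigma - 2 + 1/\sigma$, so the exponent becomes $2\log(K-1) - 0.5\sigma + 1 - 0.5/\sigma$, which matches the claimed probability $1 - e^{2\log(K-1) - 0.5\sigma - 0.5/\sigma + 1}$ verbatim. There is no real obstacle here: the mathematical substance all lives in Theorem \ref{ch5-thm:2.1}, and Theorem \ref{ch5-thm:2.5} is essentially a one-line bookkeeping computation on top of it. The only minor care needed is to confirm that the smallest eigenvalue of $diag(\sigma,\ldots,\sigma)$ is indeed $\sigma$, which is immediate since all its eigenvalues coincide.
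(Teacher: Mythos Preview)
Your proposal is correct and matches the paper's approach exactly: the paper simply notes that $\lambda_{K-1}(\mbf{\Sigma}^{-1}) = \sigma$ in the DTM case and invokes Theorem~\ref{ch5-thm:2.1}, leaving the algebraic substitution implicit. Your write-up just makes that substitution explicit, expanding $(\sigma-1)^2/\sigma = \sigma - 2 + 1/\sigma$ to recover the stated exponent.
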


\section{A fast algorithm for learning CTM} \label{ch5-sec:learning-CTM}

In this section we discuss an application of the findings in Section \ref{ch5-sec:MAP-in-CTM} to designing an efficient algorithm for learning CTM. Nonconjugacy of the prior over $\mbf{\theta}$ poses various drawbacks and precludes using sampling techniques. Hence \citet{BleiL07} proposed to use variational Bayesian methods to approximate the posterior distributions of latent variables. Variational Bayesian methods have been employed heavily for learning many other nonconjugate models \citep{SalomatinYL09,PutthividhyaAN10,PutthividhyaAN09,BleiL2006DTM,Miao+2012LAA}. The use of  simplified distributions to approximate the true posterior often results in more parameters to be optimized when learning a model. (For example, the method by \citet{BleiL07} maintains $K$ Gaussian distributions for each document.) Hence it could be problematic when the corpus is large. 

Learning CTM and other related models can be made significantly simpler by using our analysis. Indeed, to estimate the posterior $(P(\mbf{\theta} | \mbf{d}, \Upsilon))$ of topic mixtures, one can exploit fast algorithms for convex optimization. The analysis in Section \ref{ch5-sec:MAP-in-CTM} provides a theoretically reasonable justification for such an exploitation.  Once $\mbf{\theta}$ had been inferred for each document in the training data, one can follow the approach by \cite{ThanH2012fstm} to estimate topics $\mbf{\beta}$. A Gaussian prior is also easily estimated when all $\mbf{\theta}$ of the training documents are known.

\subsection{Derivation of the algorithm}

Our proposed algorithm for learning CTM is presented in Algorithm \ref{ch5-alg:learning-CTM} which is an alternative algorithm similar to EM. This algorithm tries to maximize the following regularized joint likelihood of the training corpus $\mathcal{C}$:
\begin{eqnarray*}
    L(\mbf{\beta}, \mbf{\mu}, \mbf{\Sigma}) &=& \sum_{\mbf{d} \in \mathcal{C}} \log \Pr(\mbf{\theta}, \mbf{d} | \mbf{\beta}, \mbf{\mu}, \mbf{\Sigma})  - \frac{M}{2} \alpha \Tr \mbf{\Sigma}^{-1} \\
    &=& \sum_{\mbf{d} \in \mathcal{C}} \sum_j d_j \log \sum_{k=1}^K \theta_k \beta_{kj} - \frac{1}{2}\sum_{\mbf{d} \in \mathcal{C}} (\log \tilde{\mbf{\theta}} - \mbf{\mu})^t \mbf{\Sigma}^{-1} (\log \tilde{\mbf{\theta}} - \mbf{\mu}) \\
    && -\frac{M}{2} \log \det \mbf{\Sigma}  - \frac{M}{2} \alpha \Tr \mbf{\Sigma}^{-1} + constant.
\end{eqnarray*}

The main reason for imposing a regularization term $\alpha \Tr \mbf{\Sigma}^{-1}$ on the joint likelihood is to control the eigenvalues of the learned $\mbf{\Sigma}^{-1}$. Large $\alpha$ often prevents the eigenvalues of $\mbf{\Sigma}^{-1}$ from increasing. On the other hand, small values of $\alpha$ play the role as allowing large eigenvalues of $\mbf{\Sigma}^{-1}$. In the latter case, Corollary \ref{ch5-corol:2.1} and Corollary \ref{ch5-corol:2.2} suggest that estimation of topic mixtures ($\mbf{\theta}$) is more likely to be a concave problem, and thus can be done efficiently.

In Step 1 which does posterior inference for each document, we use the Online Frank-Wolfe algorithm \citep{Hazan2012OFW} to maximize the joint probability $\Pr(\mbf{\theta}, \mbf{d} | \mbf{\beta}, \mbf{\mu}, \mbf{\Sigma})$.  This algorithm theoretically converges to the optimal solutions, provided that the optimization problem is concave.\footnote{In practice we can approximate $\overline{\Delta}_K$ by $\Delta_{\epsilon} = \{\mbf{\theta}: \sum_{k=1}^K \theta_k = 1, {\theta}_i \ge \epsilon, \forall i\}$ for a very small constant $\epsilon$, says $\epsilon = 10^{-10}$. Hence the online Frank-Wolfe algorithm should be slightly modified accordingly.} Note that Algorithm \ref{alg:Online-Frank-Wolfe} is a slight but careful modification of the general algorithm by \cite{Hazan2012OFW}, and in fact is similar with the algorithm which is presented by \cite{Clarkson2010}. 

In Step 2, we fix $\mbf{\theta}_d$ which has been inferred for each document $\mbf{d} \in \mathcal{C}$ in Step 1, and maximize $L(\mbf{\beta}, \mbf{\mu}, \mbf{\Sigma})$ to estimate the model parameters. Solving for $\mbf{\beta}$ can be done independently of $\mbf{\mu}, \mbf{\Sigma}$. Hence by using the same argument as \citet{ThanH2012fstm}, we can arrive at the formula (\ref{ch5-eq:06}) for updating topics. Maximizing the term relating to $\mbf{\mu}$ in $L(\mbf{\beta}, \mbf{\mu}, \mbf{\Sigma})$ will lead to (\ref{ch5-eq:07}) for updating $\mbf{\mu}$.

Take  $\mbf{\Sigma}$ into consideration: $L_{\alpha} = - \frac{1}{2} \sum_{\mbf{d} \in \mathcal{C}} (\log \tilde{\mbf{\theta}}_d - \mbf{\mu})^t \mbf{\Sigma}^{-1} (\log \tilde{\mbf{\theta}}_d - \mbf{\mu}) -\frac{M}{2} \log \det \mbf{\Sigma} - \frac{M}{2} \alpha \Tr \mbf{\Sigma}^{-1}$. Its derivative with respect to $\mbf{\Sigma}^{-1}$ is $\nabla L_{\alpha} = - \frac{1}{2} \sum_{\mbf{d} \in \mathcal{C}} (\log \tilde{\mbf{\theta}}_d - \mbf{\mu}) (\log \tilde{\mbf{\theta}}_d - \mbf{\mu})^t + \frac{M}{2}  \mbf{\Sigma} - \frac{M}{2} \alpha \mbf{I}_{K-1}$. Solving $\nabla L_{\alpha} = 0$, one can derive (\ref{ch5-eq:08}) for updating $\mbf{\Sigma}$.

\begin{algorithm}[t]
   \caption{fCTM: a fast algorithm for learning  correlated topic models}
   \label{ch5-alg:learning-CTM}
\begin{algorithmic}
   \STATE {\bfseries Input:} a corpus $\mathcal{C} = \{\mbf{d_1}, ..., \mbf{d}_M\}$, and a positive constant $\alpha$.
   \STATE {\bfseries Output:} $\mbf{\beta}, \mbf{\mu}, \mbf{\Sigma}$.
   \STATE {Initialize} $\mbf{\beta}, \mbf{\mu}, \mbf{\Sigma}$, and then alternate the following two steps until convergence.
   \STATE {\bfseries Step 1:} for each document $\mbf{d}$, use Algorithm \ref{alg:Online-Frank-Wolfe} to solve for
	\begin{equation} \label{ch5-eq:09}
	\mbf{\theta}_d = \arg \max_{\mbf{\theta} \in \overline{\Delta}_K} \log \Pr(\mbf{\theta}, \mbf{d} | \mbf{\beta}, \mbf{\mu}, \mbf{\Sigma})
	\end{equation}
   \STATE {\bfseries Step 2:} compute
   \begin{eqnarray}
     \label{ch5-eq:06}
     \beta_{kj} &\propto& \sum_{\mbf{d} \in \mathcal{C}} d_j \theta_{dk}, \\
     \label{ch5-eq:07}
     \mbf{\mu} &=& \frac{1}{M} \sum_{\mbf{d} \in \mathcal{C}} \log \tilde{\mbf{\theta}}_d, \\
     \label{ch5-eq:08}
     \mbf{\Sigma} &=& \alpha \mbf{I}_{K-1} + \frac{1}{M} \sum_{\mbf{d} \in \mathcal{C}} (\log \tilde{\mbf{\theta}}_d - \mbf{\mu})(\log \tilde{\mbf{\theta}}_d - \mbf{\mu})^t.
   \end{eqnarray}
\end{algorithmic}
\end{algorithm}

\begin{algorithm}[tp]
   \caption{Online Frank-Wolfe (OFW)}
   \label{alg:Online-Frank-Wolfe}
\begin{algorithmic}
   \STATE {\bfseries Input: } document $\boldsymbol{d}$, and model $\Upsilon = \{\mbf{\beta}, \mbf{\mu}, \mbf{\Sigma}\}$.
   \STATE {\bfseries Output:} $\boldsymbol{\theta}$  that maximizes \\
   $\;\;\;\;\; f(\boldsymbol{\theta}) = \sum_j d_j \log \sum_{k=1}^K \theta_k \beta_{kj} - \frac{1}{2} (\log \tilde{\mbf{\theta}} - \mbf{\mu})^t \mbf{\Sigma}^{-1} (\log \tilde{\mbf{\theta}} - \mbf{\mu}) - \sum_{k=1}^{K} \log \theta_k.$
   \STATE Initialize $\mbf{\theta}_1$ arbitrarily in $\overline{\Delta}_K$.
   \FOR{ $\ell = 1, ..., \infty$}
   \STATE Pick  $f_{\ell}$ uniformly from \\ $\;\;\; \{\sum_j d_j \log \sum_{k=1}^K \theta_{k} \beta_{kj}; \;\; - \frac{1}{2} (\log \tilde{\mbf{\theta}} - \mbf{\mu})^t \mbf{\Sigma}^{-1} (\log \tilde{\mbf{\theta}} - \mbf{\mu})  - \sum_{k=1}^{K} \log \theta_{k} \}$
   \STATE $F := \frac{1}{\ell} \sum_{h=1}^{\ell} f_h$
   \STATE $i' := \arg \max_i   \nabla F(\boldsymbol{\theta}_{\ell})_{i}$; (maximal partial gradient)
   \STATE $\alpha := {2}/{(\ell +2)}$;
   \STATE $\boldsymbol{\theta}_{\ell +1} := \alpha \boldsymbol{e}_{i'} +(1-\alpha)\boldsymbol{\theta}_{\ell}$.
   \ENDFOR
   \STATE Return $\boldsymbol{\theta}_{*}$ with largest $f$ amongst $\boldsymbol{\theta}_{1}, \boldsymbol{\theta}_{2}, ...$ 
\end{algorithmic}
\end{algorithm}

\subsection{Why may Online Frank-Wolfe help?}
 We now discuss why OFW can do  inference well in CTM even though  inference is generally non-concave. In our observation, good performance of OFW originates mainly from (1) the  probable concavity of the inference problem for which many instances in practice are concave, and from (2) the stochastic nature that allows OFW to get out of local optima to get closer to global ones.
 
 Note that Step~1 of the learning algorithm has to do inference many times, each for a specific document. Hence, we have a family of inference instances. The analysis in Section~\ref{ch5-sec:MAP-in-CTM} reveals that under some conditions, inferring topic mixtures in CTM is in fact concave. In other words, there may be many concave instances in Step~1. For them, OFW is guaranteed to converge to the optimal solutions \citep{Hazan2012OFW}.
 
 For non-concave instances, inference is more difficult as there might be many local optima. Nonetheless, OFW is able find good approximate solutions due to at least two reasons. First, OFW is able to get out of local optima to reach closer to the global ones owing to its stochastic nature in selecting  directions. Such an ability is intriguing that is missing in traditional deterministic algorithms for non-concave optimization. Second, due to the greedy nature, OFW is able to get close to local optima. 
 
\subsection{Experiments}

This section is dedicated to answering the following three questions. \emph{(a) How fast does fCTM do? (b) How good are the models learned by fCTM?} By answering these questions, we will see more clearly some benefits of studying probable convexity and the use of SGDs.  \emph{(c) How well and how fast does OFW resolve the inference problem in practice?} This question arises naturally as OFW \citep{Hazan2012OFW} was originally designed for concave problems while inference in CTM is non-concave in the worse case. Answer to this question also supports  our highlight that SGDs might be a practical choice for non-concave optimization. 

Four benchmark datasets were used in our investigation: KOS with 3430 documents, NIPS with 1500 documents, Enron with 39861 documents, and Grolier with 29762 documents.\footnote{
KOS, NIPS, and Enron were retrieved from \url{http://archive.ics.uci.edu/ml/datasets/}. \\
Grolier was retrieved from \url{http://cs.nyu.edu/~roweis/data.html}} For each dataset, we used 80\% for learning models, and the remaining part was used to check the quality and efficiency of OFW.

\subsubsection{How fast does fCTM perform?}

\begin{figure}
\centering
  \includegraphics[width=\textwidth]{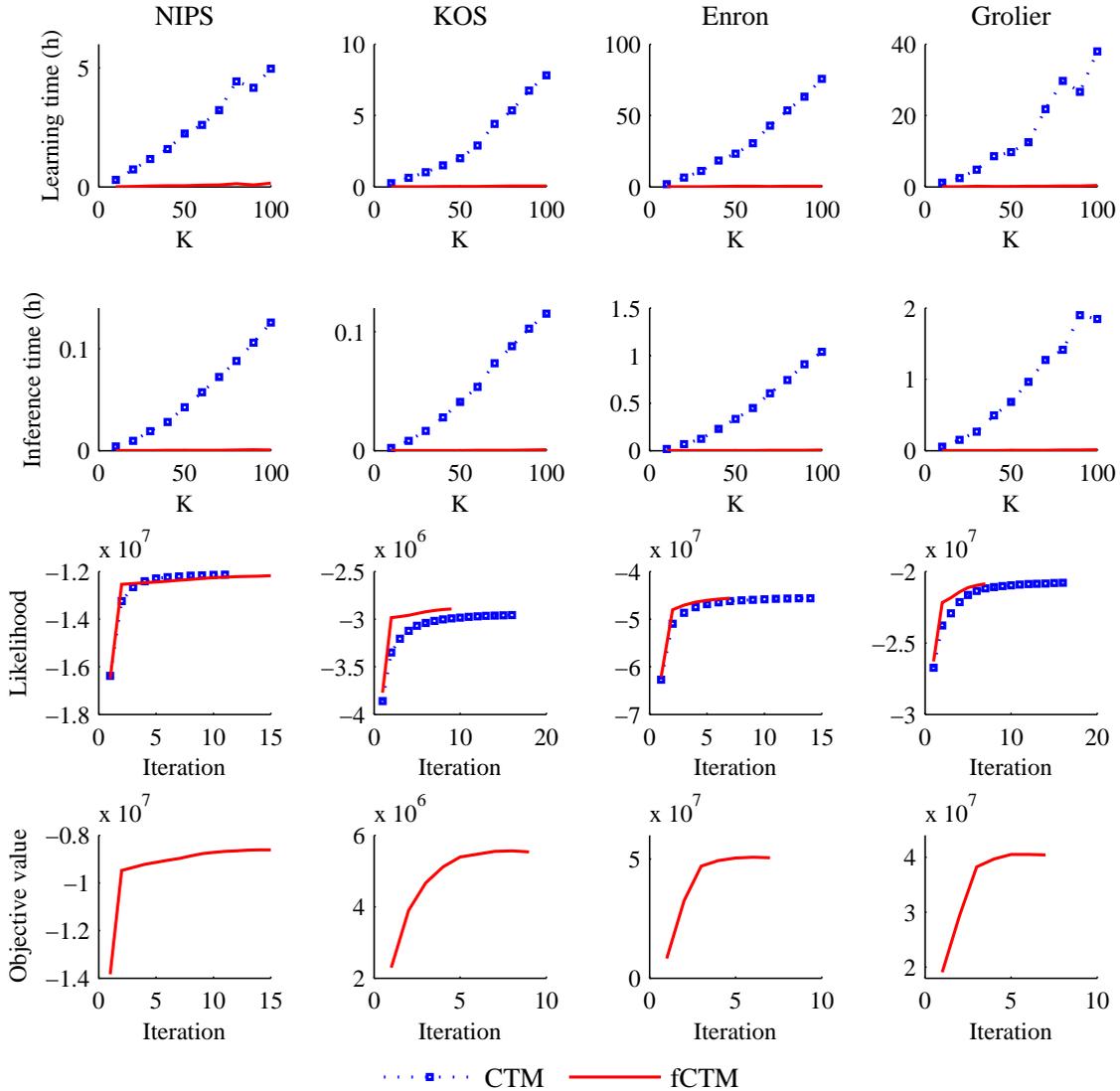}\\
  \caption[Performance of fCTM and CTM]{Performance of fCTM and CTM as the number $K$ of topics increases. Lower is better for inference/learning time, whereas higher is better for likelihood. The last two rows show how fast fCTM can reach convergence for $K=100$. We observe that fCTM often learns 60-170 times faster than CTM.}
  \label{ch5-fig:time-fCTM-CTM}
\end{figure}

To answer the first two questions and to see advantages of our algorithm (fCTM), we took the variational Bayesian method (denoted as CTM) by \citet{BleiL07} into comparison. {We used the same convergence criteria for fCTM and CTM: relative improvement of objective functions is less than $10^{-6}$ for inference of each document, and $10^{-3}$ for learning; at most 100 iterations are allowed to do inference. We used default settings for some other parameters of CTM. To avoid doing cross-validation for selecting the best value of $\alpha$ in fCTM, we used $\alpha=1$ as the default setting.}

Figure~\ref{ch5-fig:time-fCTM-CTM} records some statistics from learning and inference. We observed that fCTM learns significantly faster than CTM. Similar behavior holds when doing inference for each document. In our observations, fCTM often learns 60-170 times faster than CTM. Speedy learning of fCTM can be explained by the fact that Step~1 is done efficiently by OFW which has a linear convergence rate, provided that the inference problem is concave. In the cases of non-concave problems, OFW is still able to find efficiently approximate solutions. We observe that OFW often works 50-170 times faster than the variational method. In contrast, CTM did slowly because many auxiliary parameters need to be optimized when doing inference for each document. Furthermore, the variational method  is not guaranteed to converge quickly. Figure~\ref{ch5-fig:time-fCTM-CTM} shows that CTM often needs intensive time to do inference.

\textit{Convergence speed:}
The last two rows in Figure~\ref{ch5-fig:time-fCTM-CTM} show how fast CTM and fCTM can reach convergence. Both methods can reach convergence in a relatively few iterations. We observe that both methods rarely need 20 iterations to reach convergence; they both can reach stable after 10 iterations. Such a behavior would be beneficial when working in the cases of limited time.

\subsubsection{How good are the models learned by fCTM?}

\textit{Likelihood} and \textit{coherence} are used to see the quality of models learned from data. Coherence is used to assess quality (goodness and interpretability) of individual topics. It has been observed to  reflect well human assessment \citep{Mimno2011OptimizingCoherence}. 

To calculate the coherence of a topic $k$, we first choose the set $V^k = \{v_1^k, ..., v_t^k\}$ of the top $t$ terms that have highest probabilities in that topic, and then compute
\[
C(k, V^k) = \sum_{m=2}^{t} \sum_{l=1}^{m-1} \log \frac{D(v_m^k , v_l^k) +1}{D(v_l^k)}
\] 
where $D(v)$ is the document frequency of term $v$, $D(u, v)$ is the number of documents that contain both terms $u$ and $v$. In our experiments, we chose top $t=20$ terms for investigation, and coherence of individual topics is averaged: 
\[coherence = \frac{1}{K} \sum_{k=1}^{K} C(k, V^k).\]

\begin{figure}
\centering
  \includegraphics[width=\textwidth]{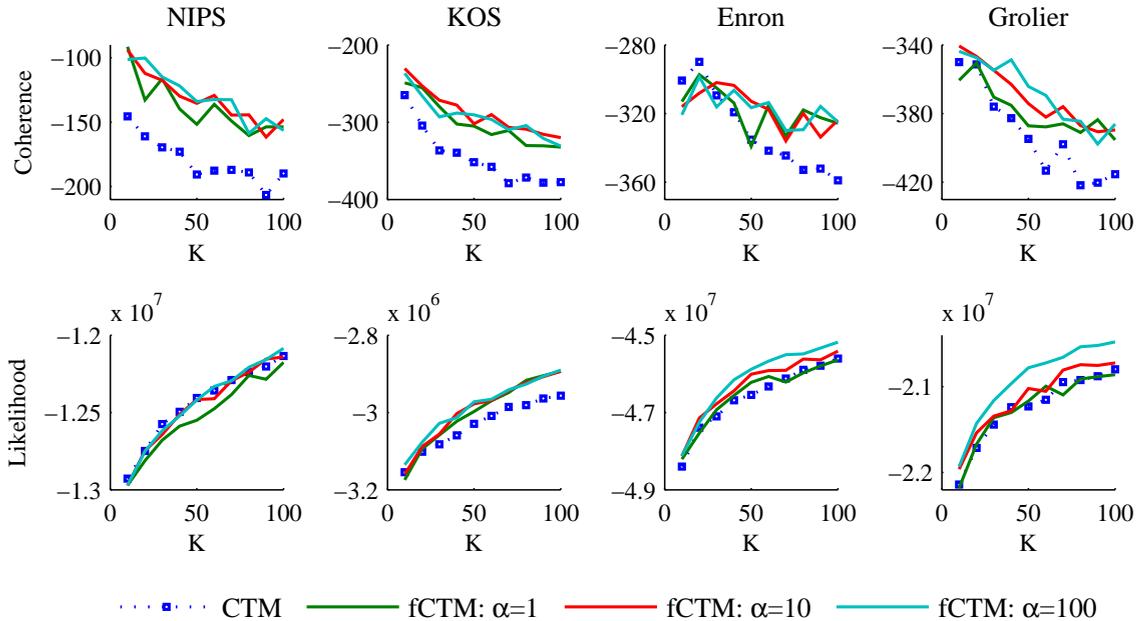}\\
  \caption[Performance of fCTM and CTM]{Quality of  the models which were learned by fCTM (solid lines) and CTM (dashed lines). Higher is better.}
  \label{ch5-fig:quality-fCTM-CTM}
\end{figure}

Figure \ref{ch5-fig:quality-fCTM-CTM} shows the quality of the learned models. We observe that the two learning methods performed comparably in terms of likelihood. Note from Figure~\ref{ch5-fig:time-fCTM-CTM} that fCTM is able to reach comparable likelihood to CTM within few iterations, even though the main objective function of fCTM for learning is not likelihood. This behavior shows further the advantage of our algorithm. 

In terms of coherence, the topic quality of CTM seems to be inferior to that of fCTM. Both methods often tend to learn less interpretable (but more specific) topics as the number $K$ increases. CTM seems to degrade topic quality faster than fCTM as increasing $K$. We observe further that fCTM often learns significantly better topics than CTM in the cases of large $K$. When investigating the models learned by fCTM, we find that individual topics are very meaningful as depicted partially in  Figure~\ref{ch5-fig:pos-correlation-gro-100-fCTM}. Those observations demonstrate advantages of fCTM  over CTM for practical applications, such as exploration or discovery of interactions of hidden topics/factors.

\textit{Models of hidden interactions:} Figure~\ref{ch5-fig:pos-correlation-gro-100-fCTM} and \ref{ch5-fig:neg-correlation-gro-100-fCTM} shows  parts of the full model with 100 topics learned by fCTM from Grolier. Figure \ref{ch5-fig:pos-correlation-gro-100-fCTM} shows positive correlations between topics, while Figure \ref{ch5-fig:neg-correlation-gro-100-fCTM} shows negative correlations. It can be observed that the learned topics are interpretable and the discovered correlations are reasonable. Those further support that fCTM is able to learn qualitative models.

\begin{figure}
\centering
  \includegraphics[trim=29cm 5cm 39cm 10cm, clip=true, width=\textwidth] {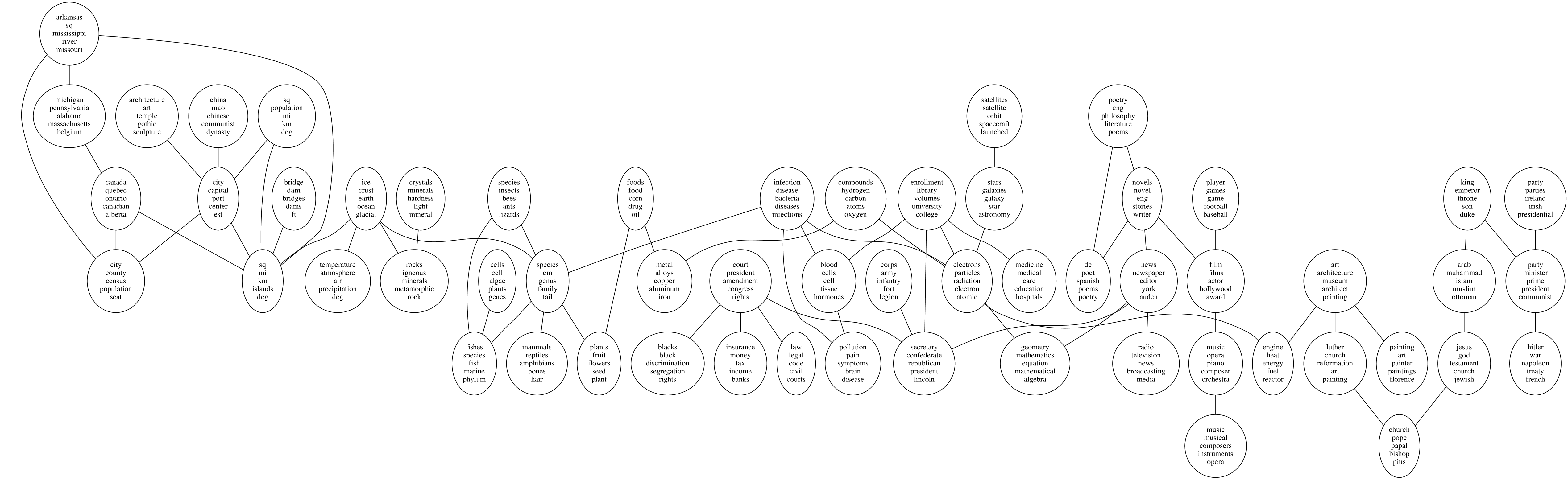}\\
  \caption[Model: topics with positive correlations]{Illustration of the correlated topic model with 100 topics which was learned by fCTM from Grolier articles. An edge connecting two topics shows that if one topic appears  in a document, the other likely appears as well. This visualization was drawn with Graphviz \citep{GansnerN00graphviz}}.
  \label{ch5-fig:pos-correlation-gro-100-fCTM}
\end{figure}

\begin{figure}
\centering
  \includegraphics[width=\textwidth] {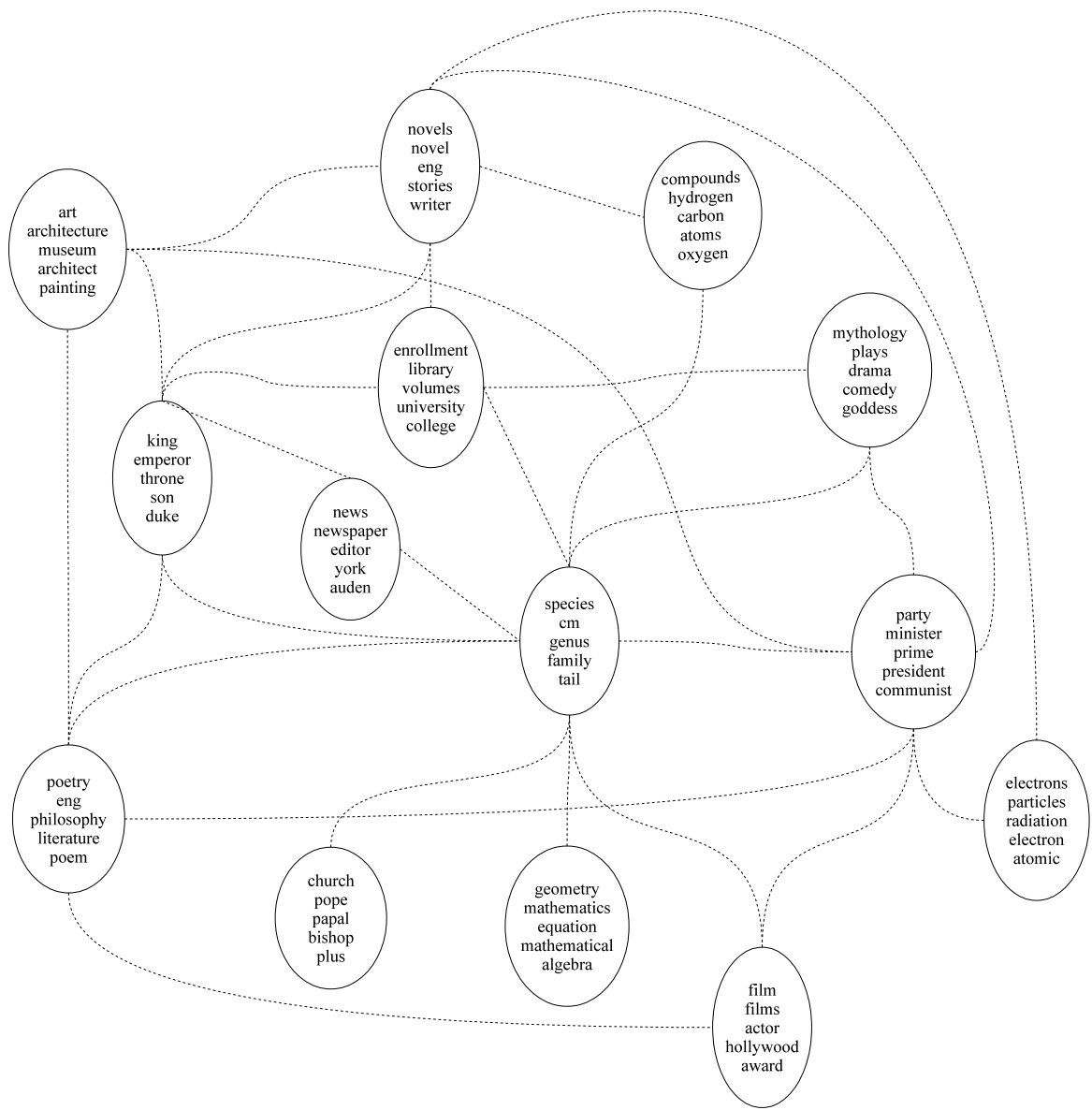}\\
  \caption[Model: topics with negative correlations]{Illustration of the correlated topic model with 100 topics which was  learned by fCTM from Grolier. An edge connecting two topics shows that the two topics \textit{unlikely appear together} in a document. }
  \label{ch5-fig:neg-correlation-gro-100-fCTM}
\end{figure}


\subsubsection{Quality and speed of OFW}

We have seen in the last parts that OFW (an example of SGD algorithms) is really beneficial in helping fCTM to work efficiently. It seems to have more advantages than variational methods when being employed in CTM. Next, we are interested in performance of OFW as an algorithm for non-concave problems. 

Problem~(\ref{ch5-eq:003}) was used for investigation. The testing parts of the  datasets were used to provide documents ($\mbf{d}$) for (\ref{ch5-eq:003}). We used the models ($\Upsilon = \{\mbf{\beta}, \mbf{\mu}, \mbf{\Sigma}\}$) which have 100 topics and have been learned previously from the training data. Totally, we have 11197 instances of problem~(\ref{ch5-eq:003}) for investigation.

For comparison, we took \textit{Sequential Least Squares Programming} (SLSQP) as a standard method for non-convex optimization \citep{PerezJM09-pyOpt}. Various methods have been proposed, but SLSQP seems to be one among the best solvers according to different tests \citep{PerezJM09-pyOpt}. Therefore it was taken in comparison with OFW.\footnote{The variational method by \cite{BleiL07} was not considered for comparison. The reason comes from the differrence of problems to be solved. Indeed, OFW tries to maximize $\Pr(\mbf{\theta, d})$ whereas the variational method tries to maximize a lower bound of the likelihood of document $\mbf{d}$. Hence it is difficult to compare quality of the two methods. The last subsection has discussed inference time of the two methods.}
The same criterion was used to assess convergence for both methods: relative improvement of objective functions is no better than $10^{-6}$, and the number of iterations is at most 100.

\begin{table}[tbp]
\caption{Statistics of OFW and SLSQP after solving 11197 non-concave problems. ``{\#Fails to solve}" shows the number of problems that a method found infeasible solutions. The last three rows show the number of problems that a method performs better than ($>$) or comparably with ($\approx$) or worse than ($<$) the other one. We observe that OFW often performs 150-2000 times faster than SLSQP. For most problems of interest, OFW found significantly better solutions than SLSQP.}
\label{fig:-OFW-inf-compare}
\begin{center}
\begin{tabular}{|l|r|r|r|r|r|}
\hline 
\multicolumn{ 2}{|l|}{\textbf{Data}} & \textbf{ NIPS} & \textbf{ KOS} & \textbf{ Enron} & \textbf{ Grolier } \\ \hline
\multicolumn{ 2}{|l|}{Total number of problems} & 150 &  343 &  3986 &  6718  \\ \hline \hline
\multicolumn{ 1}{|l|}{Average time (seconds)} & SLSQP & 18.6729 & 2.2018 & 1.3871 & 1.5579 \\ \cline{ 2- 6}
\multicolumn{ 1}{|l|}{ to solve a problem} & \textit{OFW} & \textit{0.0161} & \textit{0.0069} & \textit{0.0056} & \textit{0.0073} \\ \hline \hline
\multicolumn{ 1}{|l|}{Objective value (averaged)} & SLSQP & -9708.9410 & -169.3041 & 117.7672 & -127.7626 \\ \cline{ 2- 6}
\multicolumn{ 1}{|l|}{} & \textit{OFW} & \textit{-8860.8170} & \textit{1464.7165} & \textit{1753.9598} & \textit{1572.2495} \\ \hline \hline
\multicolumn{ 1}{|l|}{\#Fails to solve} & SLSQP & 37 & 172 & 1981 & 3316 \\ \cline{ 2- 6}
\multicolumn{ 1}{|l|}{} & \textit{OFW} & \textit{0} & \textit{0} & \textit{0} & \textit{0} \\ \hline \hline
\multicolumn{2}{|l|}{\#OFW $>$ SLSQP} & 129 & 343 & 3979 & 6700 \\ \hline
\multicolumn{2}{|l|}{\#OFW $\approx$ SLSQP} & 21 & 0 & 7 & 16 \\ \hline
\multicolumn{2}{|l|}{\#OFW $<$ SLSQP} & 0 & 0 & 0 & 2 \\ \hline 
\end{tabular}
\end{center}
\end{table}

Table \ref{fig:-OFW-inf-compare} shows some statistics from our experiments. It can be observed that SLSQP often needs intensive time to solve a problem, while OFW consumes substantially less time. We observe that OFW often works 150-2000 times faster than SLSQP. Slow performance of SLSQP mainly comes from the need to solve many intermediate quadratic programming problems, each of which often requires considerable time in our observations. On contrary, each iteration of OFW is very modest,  which mostly requires computation of partial derivatives. 

In terms of quality, we observe that OFW was able to find significantly better approximate solutions than SLSQP. When inspecting individual problems, we found that SLSQP failed to find feasible solutions for many problems, e.g., a large number of returned solutions were significantly out of domain ($\overline{\Delta}_K$). In contrast, OFW always manages to find feasible solutions. Among 11197 problems, OFW performed significantly worse than SLSQP for only 2. Those observations demonstrate that OFW has many advantages over  (deterministic) SLSQP. Further, it is able to find good approximation solutions for non-concave problems with a modest requirement of computation. 

%
%

\section{Conclusion and discussion} \label{ch5-sec:conclusion}

We have introduced  the concept of probable convexity to analyze real functions or families of functions. It is the way to see how probable a real function is convex. In particular, it can reveals how many members of a family of functions are convex. When a family contains most convex members,  we could deal with the family efficiently in practice. Hence probable convexity provides a feasible way to deal with non-convexity of real problems such as  posterior estimation  in  probabilistic graphical models.

When analysing probable convexity of the problem of estimating topic mixtures in CTM \citep{BleiL07}, we found that this problem is concave under certain conditions.  The same results were obtained for many nonconjugate models. These results suggest that posterior inference of topic mixtures in those models might be done efficiently in practice, which seems to contradict with the belief of intractability in the literature. Benefiting from those theoretical results,  we proposed a novel algorithm for learning CTM which can work 60-170 times faster than the variational method by \cite{BleiL07}, while keeping  or making better the quality of the learned models. We believe that by using the same methodology as ours, learning for many existing nonconjugate models can be significantly accelerated. An implementation of our algorithm is freely available at \url{http://is.hust.edu.vn/~khoattq/codes/fCTM/}

There is a unusual employment of the Online Frank-Wolfe algorithm (OFW) \citep{Hazan2012OFW} to solve nonconvex problems (inference of topic mixtures in CTM). OFW is a specific instance of stochastic gradient descent algorithms (SGDs) for solving convex problems. By a careful employment, OFW behaves well in solving the inference problem which is nonconcave in the worse case. It helps us to design an efficient and effective algorithm for learning CTM. Such a successful use of OFW suggests that SGDs might be a practical choice to deal with nonconvex problems. In our experiments, OFW found significantly better solutions whereas performed 150-2000 times faster than SLSQP (the standard algorithm for nonconvex optimization). This further supports our highlight about SGDs. We hope that this highlight would open various rooms for future studies on connection of SGDs with nonconvex optimization. 


\pagebreak


\end{document}